%% file: main/main.tex
\documentclass[onefignum,onetabnum]{siam_style_file/siamart251216}
\usepackage{tikz}
\usepackage{enumitem}
\usepackage{algorithm}
\usepackage{algpseudocode}
\usepackage{amsmath}
\usepackage{rotating}
\usepackage{multirow}

\usepackage{amsfonts}
\usepackage{graphics}
\usepackage[
 letterpaper, top=1.2in, bottom=1.2in, left=1.2in, right=1.2in]{geometry}
\usepackage{xcolor}
\usepackage{amssymb}
\usepackage{psfrag}
\usepackage{graphicx}
\usepackage[utf8]{inputenc}
\usepackage{bm}
\usepackage{mathtools}
\usepackage{epstopdf}
\usepackage{setspace}
\usepackage{amsmath,amssymb,bbm}
\usepackage[utf8]{inputenc}
\usepackage{url}
\usepackage{color}
\usepackage{booktabs}
\newtheorem{assumption}{Assumption}[section]

\input{share_information/ex_shared}

\usepackage{bbm}

\allowdisplaybreaks

\begin{document}

\maketitle


\begin{abstract}
Turbulent dynamical systems are characterized by nonlinear interactions and stochastic effects that generate coupled statistical quantities, such as non-zero higher-order moments, which are difficult to capture from data with accuracy. We propose a two-stage data-driven modeling framework that combines symbolic regression with generative models to jointly identify the governing dynamics and predict their key statistical quantities. In Stage I of the framework, the Finite Expression Method (FEX) is adopted to discover closed-form expressions of the deterministic dynamics, recovering nonlinear interaction terms and external forcing without predefined libraries. In Stage II, generative models are introduced to learn the residual stochastic components as a refined correction to the model error from the Stage I approximation, enabling accurate characterization of higher-order statistics.  Theoretical analysis establishes the consistency of the symbolic estimator and quantifies the estimation error in terms of data size and numerical discretization. 
The model performance is verified through detailed numerical experiments on the stochastic triad  models across multiple regimes, demonstrating that the framework successfully recovers interaction terms and forcing expressions, and accurately predicts statistical moments up to order five. These results highlight the potential of integrating interpretable symbolic discovery with data-driven stochastic modeling for complex turbulent systems.
\end{abstract}

\begin{keywords}
Symbolic discovery, turbulent dynamical systems, generative models, scientific machine learning
\end{keywords}

\section{Introduction}
\label{Introduction}

Turbulent dynamical systems arise in a wide range of scientific and engineering applications and are characterized by strong nonlinear interactions and intrinsic instabilities in a high-dimensional phase space~\cite{majda2006nonlinear,nicholson1983introduction,vallis2017atmospheric}. The presence of a strange attractor and multiple positive Lyapunov exponents leads to rapid amplification of small perturbations, making long-term prediction inherently uncertain and necessitating a stochastic description for the state evolution~\cite{majda2018strategies,qi2017strategies,qi2016low,wang2025simulating}. Despite substantial progress, modeling such systems remains challenging due to imperfect model errors, unresolved scales, and limited observational data~\cite{lorenz1969predictability,majda2010quantifying}. In such turbulent regimes, external forcing injects variability with uncertainty, while nonlinear interactions redistribute this variability across modes and couple statistical quantities of different orders. Consequently, higher-order moments contain essential information beyond low-order summaries, and capturing the underlying forcing and nonlinear interactions is essential for reproducing high-order moment evolutions, which characterize turbulent behaviors.
While recent data-driven methods have shown success in learning local or short-term dynamics, they often rely on low-order statistical representations, such as the mean and variance, which may be insufficient to capture these essential behaviors of turbulent dynamics~\cite{duraisamy2019turbulence,ling2016reynolds}.

In this context, symbolic learning provides an interpretable framework for modeling dynamical systems by identifying a closed-form representation of the underlying dynamics, including nonlinear interaction terms and external forcing. Rather than fitting trajectories or low-order statistics alone, symbolic approaches aim to uncover these components, which directly govern the evolution of high-order statistical quantities. 
A representative approach is Sparse Identification of Nonlinear Dynamical Systems (SINDy)~\cite{brunton2016discovering,lenfesty2025uncovering,messenger2021weak,wanner2024higher}, which formulates model discovery as a sparse regression problem over a predefined library of candidate functions, reducing the task to selecting a small subset of active terms. 
Several variants have been proposed to improve SINDy's performance under different settings. In particular, Weak SINDy~\cite{messenger2021weak} reformulates the regression problem in a weak integral form, avoiding explicit differentiation and improving robustness to noise. However, both SINDy and Weak SINDy rely on a predefined function library, which limits their ability to represent complex forcing expressions that fall outside the chosen basis. Moreover, even when the correct interaction terms are identified, the strong stochasticity in turbulent systems makes accurate coefficient estimation challenging, often leading to violations of underlying conservation laws, such as energy-preserving exchanges among modes.
In contrast, the Finite Expression Method (FEX), as another symbolic approach, removes the restrictions imposed by predefined libraries and represents equations as compositional structures built from a finite set of elementary operators~\cite{du2025learning,hardwick2025solving,lai2025h,liang5327407identifying,liang2025finite,song2025finite}. Specifically, FEX represents candidate expressions as binary trees, enabling hierarchical compositions that capture nested functional forms beyond the linear span of predefined basis functions. This leads to a combinatorial search over closed-form expressions, which is addressed using reinforcement learning within a mixed discrete–continuous optimization framework. As a result, FEX can identify compact and interpretable representations of the underlying dynamics, particularly for nonlinear interactions and external forcing in turbulent systems, while preserving interaction structures consistent with underlying conservation laws.

In this work, we introduce the FEX framework to derive the statistical solution of turbulent dynamical systems, with an emphasis on recovering not only low-order moments but also high-order statistical quantities induced by interaction terms and external forcing. Since accurately reproducing high-order moments requires capturing both deterministic structure and stochastic effects, we propose a two-stage framework in which FEX identifies the deterministic component of the dynamics, including interaction terms and deterministic forcing in Stage I, while an additional Stage II is introduced to model the stochastic effects such as model uncertainty and model errors via generative models, i.e., probabilistic models designed to learn and sample from underlying turbulent dynamics distributions~\cite{goodfellow2014generative,huynh2025score,kingma2013auto,liu2025training,tzen2019neural,wang2026error,xu2024modeling,yang2025generative}, to enable faithful recovery of high-order moments involving highly non-Gaussian distributions. This two-stage framework leverages the complementary strengths of distinct modeling paradigms. 
Stage I uses FEX to recover each term in the governing turbulent dynamics in closed form, avoiding spurious expressions introduced by noise. Stage II then learns the probability distribution evolution of the residual between the FEX-derived deterministic dynamics and the observed dynamics. Although FEX may not perfectly capture the true dynamics, discrepancies in the learned coefficients are effectively corrected through this residual modeling.  See Figure~\ref{fig:two-stage} for an illustration of the framework.

\begin{figure}[!htbp]
    \centering
    \includegraphics[width=1.0\linewidth]{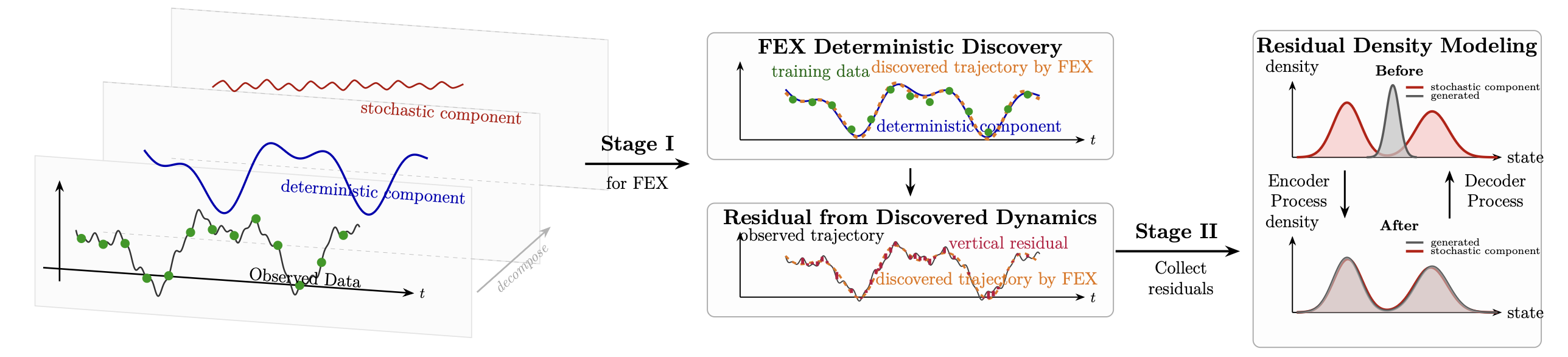}
    \caption{Workflow of two-stage framework: Stage I models determinstic component and Stage II models stochastic component.}
    \label{fig:two-stage}
\end{figure}

Our main contributions in this paper are as follows:
\begin{itemize}
    \item \textbf{Accurate symbolic discovery with limited data across varying turbulence regimes.} Across regimes ranging from deterministic to strongly stochastic, FEX maintains uniformly high skill in recovering the correct deterministic expression formula and accurately identifies nonlinear interaction coefficients, even with limited samples, with performance improving as data increases. The learned coefficients also respect intrinsic physical constraints, such as energy-conserving transfer across modes, through the FEX fine-tuning process.
    \item \textbf{Out-of-training-domain generalization under constant noise.} Under a constant noise assumption, Stage II learns the distribution of residuals between the synthetic data and the corresponding outputs of the Stage I model, enabling trajectory resampling beyond the training regime and improving the long-term estimation of high-order moments.
\end{itemize}

The remainder of this article is organized as follows. Section~\ref{PS} presents the problem formulation. Section~\ref{ASD_Method} describes the FEX framework together with the generative models used in Stage II. Section~\ref{ASD_Implementation} details the implementation of the proposed two-stage framework, while Section~\ref{Numerical_Analysis} provides a theoretical foundation. Section~\ref{Numerical_Experiment} illustrates its numerical performance for high moments recovery, followed by a summarizing discussion in Section~\ref{sec:sum}.

\section{Problem Description}\label{PS}

We consider a $d$-dimensional dynamical system in which a dominant deterministic structure is coupled with stochastic variability. The evolution of the system can be written as
\begin{equation}\label{GeneralTurbul}
    \frac{d\mathbf{u}(t)}{dt}
    = (\mathbf{L} + \mathbf{D})\,\mathbf{u}(t)
    + \mathbf{B}(\mathbf{u}(t), \mathbf{u}(t))
    + \mathbf{F}(t)
    + \boldsymbol{\sigma}(t)\,\dot{\mathbf{W}}_t, 
\end{equation}
where \(\mathbf{u}(t)\in\mathbb{R}^d\)
 denotes the system state at time \(t\). 
The matrix $\mathbf{L}\in\mathbb{R}^{d\times d}$ is skew-symmetric and conserves energy in the linear dynamics, 
while $\mathbf{D}\in\mathbb{R}^{d\times d}$ is negative definite and induces linear dissipation. 
The quadratic term $\mathbf{B}(\mathbf{u},\mathbf{u})\in\mathbb{R}^{d}$ represents nonlinear interactions that preserve energy. 
The forcing term $\mathbf{F}(t)$ and the stochastic term $\boldsymbol{\sigma}(t)\dot{\mathbf{W}}_t$ inject and redistribute energy across modes.
Here, 
$\boldsymbol{\sigma}(t)\in\mathbb{R}^{d\times s}$ denotes the diffusion coefficient matrix, and $\mathbf{W}_t$ is a $s$-dimensional standard Brownian motion, where $s \le d$ allows for possibly low-rank noise. Many complex turbulent dynamical systems can be cast into the abstract form in~\eqref{GeneralTurbul}, inheriting the underlying energy-conserving structure of the linear and nonlinear interactions. Representative examples include the (truncated) Navier--Stokes equations~\cite{pope2001turbulent}, as well as fundamental geophysical models for the atmosphere, ocean, and climate systems that incorporate effects such as rotation, stratification, and topography~\cite{majda2006nonlinear,majda2016introduction,rangan2009multiscale}.

Using the Euler–Maruyama scheme \cite{gu2023stationary}, the dynamics \eqref{GeneralTurbul} can be solved in the discrete form as
\begin{equation}\label{FM1}
\mathbf{u}(t+h)= \mathbf{u}(t) 
+ 
h{\big[(\mathbf{L}+\mathbf{D})\mathbf{u}(t)
+
\mathbf{B}(\mathbf{u}(t),\mathbf{u}(t))
+
\mathbf{F}(t)\big]}
+
{\boldsymbol{\sigma}(t)\big(\mathbf{W}_{t+h}-\mathbf{W}_t\big)}, 
\end{equation}
where $h>0$ is a fixed step size and the Brownian increment satisfies
\(
\mathbf{W}_{t+h}-\mathbf{W}_t \sim \mathcal{N}(0, h\mathbf{I}_d) 
\), with $\mathbf{I}_d$ denoting the $d$-dimensional identity matrix. This representation naturally separates the evolution into a drift term and a stochastic increment, motivating a decomposition into deterministic and stochastic components.
Accordingly, we denote the deterministic component as 
\[
\mathbf{C}(\mathbf{u},t) 
:= 
(\mathbf{L}+\mathbf{D})\mathbf{u}(t)
+
\mathbf{B}(\mathbf{u}(t),\mathbf{u}(t))
+
\mathbf{F}(t),
\]
and the stochastic component as
\[
\mathbf{S}(t,{z};h)
:=
\boldsymbol{\sigma}(t)\big(\mathbf{W}_{t+h}-\mathbf{W}_t\big)
=
\boldsymbol{\sigma}(t)\sqrt{h}\,{z},
\]
where 
 \(
 \mathbf{W}_{t+h}-\mathbf{W}_t = \sqrt{h}\,{z}, 
 \text{ and } {z} \sim \mathcal{N}(0,\mathbf{I}_d).
 \)
Based on this formulation, we employ a two-stage framework as follows: Stage I uses FEX to identify a symbolic representation of $\mathbf{C}(\mathbf{u},t)$ , capturing the structured dynamics, while Stage II uses generative models to both learn $\mathbf{S}(t,{z};h)$ and correct the model error from Stage I, accounting for the residual stochastic variability. Together, these components enable accurate recovery of higher-order statistical behavior in turbulent systems~\eqref{GeneralTurbul}.

In particular, we consider three representative generative models to model the stochastic component: the Training-Free Diffusion Model (TFDM)~\cite{huynh2025score,liu2025training,wang2026error,yang2025generative}, the Stochastic Residual Activation Network (SRAN)~\cite{goodfellow2014generative,tzen2019neural}, and the Variational Autoencoder (VAE)~\cite{kingma2013auto,xu2024modeling}.
TFDM formulates the stochastic process via forward and reverse stochastic differential equations (SDE), where the score function is approximated numerically without explicit training. SRAN provides a lightweight data-driven mapping from Gaussian noise to stochastic residuals. VAE adopts a latent-variable framework that captures stochastic structure through probabilistic encoding and decoding, enabling generation via a learned latent distribution.

For the simplicity of notation, we assume that the observation dataset of turbulent dynamics \eqref{GeneralTurbul} is generated from the discrete formulation \eqref{FM1} and consists of \(M\) independent solution trajectories sampled at uniform time steps over the interval \([0,T]\). Let \(N = T/h\) be the total number of time steps, and define the discrete observation times \(t_n = nh\) for \(n=0,1,\cdots,N.\)
For the \(m\)-th trajectory, the observed state at time \(t_n\) is given by
\[\mathbf{u}^{(m)}(t_n) = \big(u_1^{(m)}(t_n),\cdots,u_{d}^{(m)}(t_n)\big)^\top\in \mathbb{R}^d,\]
where $u_i^{(m)}(t_n)\in\mathbb R$ ($i=1,\cdots d, m=1, \cdots M$) denotes the \(i\)-th component of the state vector. These component-wise observations are further used to define scalar input–output data pairs
\[\big(u_{i}^{(m)}(t_n),u_{i}^{(m)}(t_{n+1})\big) \in \mathbb{R} \times \mathbb{R}, \ \text{for }n=0,\cdots,N-1,\]
which serve as the basic training samples in the subsequent procedure. Finally, collecting the states of all trajectories at a fixed time \(t_n\) yields the following data matrix:
\[\mathbf{u}(t_n) = \big(\mathbf{u}^{(1)}(t_n), \cdots,\mathbf{u}^{(M)}(t_n)\big)^\top \in \mathbb{R}^{M\times d},\]
whose rows correspond to different trajectories and whose columns correspond to different state components.

\section{A Two-Stage Methodology for Statistical Prediction}\label{ASD_Method}
In this section, we introduce the details of FEX and three representative generative models. Due to the nonlinearity and complex attractor structure of turbulent systems, directly learning the full dynamics is computationally challenging. To improve efficiency, we perform symbolic discovery component-wise: for a
$d$-dimensional system, FEX identifies each component of the deterministic vector field separately, and the resulting expressions are combined to form the recovered system. The residual between this symbolic representation and the true dynamics is then passed to Stage II, where TFDM, SRAN, and VAE are used to model the remaining stochastic effects and correct the Stage I model error, enabling accurate recovery of the full dynamics.

\subsection{Finite Expression Method for Deterministic Dynamics}\label{FEX}

In FEX, every symbolic expression  is constructed through a binary tree $\mathcal{T}$, where each node assigns an operator drawn from either the unary operator set $\mathbb{U}$ or the binary operator set $\mathbb{B}$. Typical choices include:
\[
\begin{aligned}
\mathbb{U} &= \left\{\sin(\cdot),\exp(\cdot),\log(\cdot),\text{Id},(\cdot)^2,\frac{\partial}{\partial x_i}(\cdot),\ldots\right\}, &
\mathbb{B} = \left\{+,-,\times,\div,\otimes,\oplus, \ldots\right\},
\end{aligned}
\]
with \(\times\) and \(\div\) denoting element-wise multiplication and division, respectively. \(\otimes\) denotes the tensor product, and \(\oplus\) denotes the direct sum. Here, all unary operators act element-wise on the input and are typically implemented
with trainable scaling and bias, while the binary operators combine two inputs through algebraic operations. For example, in a depth-1 tree \(\mathcal{T}\) as illustrated in Figure~\ref{fig:FEX_tree}, the output\(o_1\) of a unary operator \(u_1\) acting input \(i_0\) is
\(o_1=\alpha \cdot u_1(i_0) + \beta\), with scaling \(\alpha\) and bias \(\beta\); the output of a binary operator \(b_1\) acting on two inputs \(i_{01}, i_{02}\) is \(o_1 = b_1(i_{01}, i_{02})\), with no additional scaling. For deeper trees, this extends recursively, with each operator applied to the outputs of its children before passing the result to its parent, enabling increasingly complex compositions of unary and binary operators evaluated bottom-up from the leaves to the root. In this work, we adopt a depth \(L=3\) tree as the one-dimensional tree block to construct the overall tree in Section~\ref{FEX_search}. Previous experiments~\cite{lai2025h, liang2025finite, song2025finite} indicate that FEX is not sensitive to the choice of the one-dimensional tree block, and that it enables simple and computationally efficient implementations.

\begin{figure}[!htbp]
    \centering
    \includegraphics[width=1\linewidth]{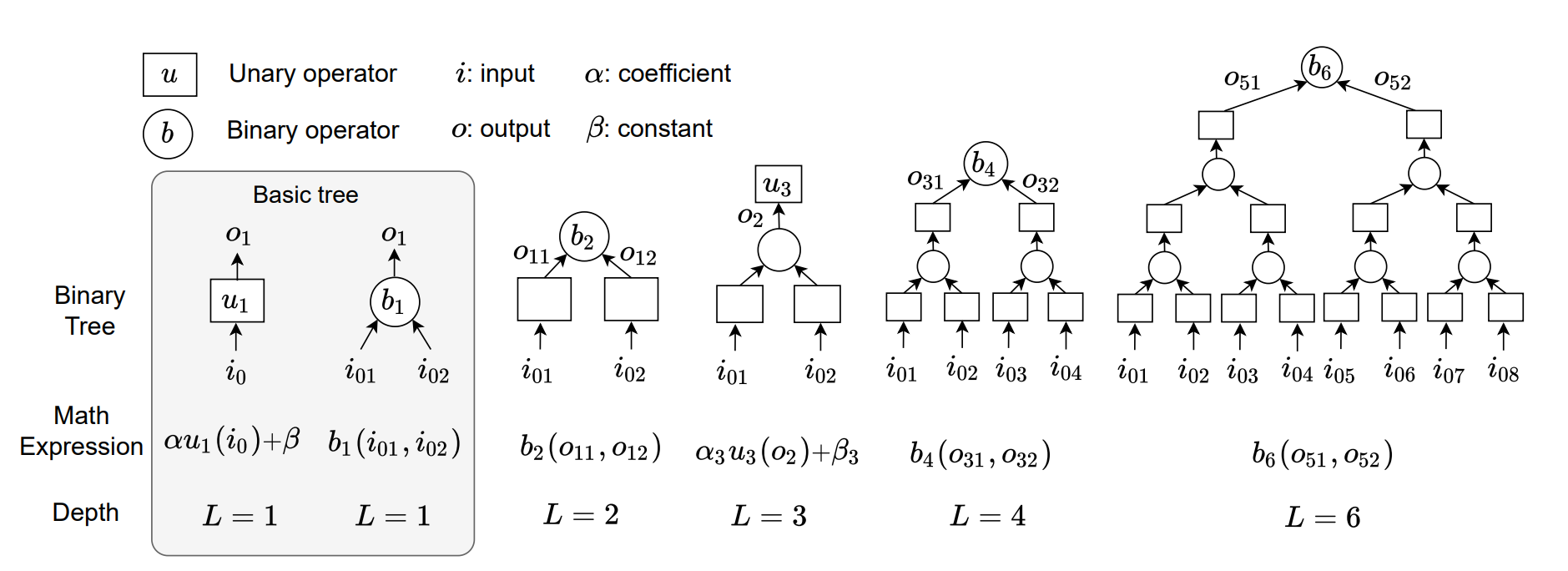}
    \caption{Binary-tree representation in FEX. Each node in $\mathcal{T}$ corresponds to a unary or binary operator. A depth-1 tree ( \(L = 1\)), consists of a single operator, while deeper trees ( \(L > 1\)) compute outputs recursively through hierarchical composition of operators.}
    \label{fig:FEX_tree}
\end{figure}

For each state component \(i = 1,\cdots, d\), we collect
the operators associated with all nodes of a fixed tree $\mathcal{T}$ into an operator sequence $\mathbf{e}_i$, together with their corresponding trainable parameters ${\omega}_{i}$. This yields a representation of a symbolic expression \(C_{i}(\cdot; \mathcal{T}, \mathbf{e}_{i}, {\omega}_{i}):\mathbb{R}^d \to \mathbb{R}\), which we abbreviate as \(C_i(\cdot)\) when the parameters are clear from context. 
For convenience, we denote the pair of discrete operators and continuous parameters by
\(
\Theta_i = (\mathbf{e}_i,\omega_i).
\)
The optimal symbolic expression for the \(i\)-th component is discovered by solving the   mixed optimization problem:
\begin{equation}\label{CO}
  \Theta_i^{*} = 
  \arg\min_{\Theta_i} \mathcal{L}(\Theta_i).
\end{equation}
Here \(\Theta_i^{*} = (\mathbf{e}_i^{*},\omega_i^{*})\) denotes the optimal operator sequence and its corresponding parameters. 
The empirical loss is defined as
\[
\mathcal{L}(\Theta_i)
=
\frac{1}{2MN}
\sum_{m=1}^{M}\sum_{n=0}^{N-1}
\left|
\Delta \mathbf{u}_i^{(m)}(t_n)
-
C_i(\mathbf{u}^{(m)}(t_n))
\right|^2.
\]
with the empirical time increment
\[\Delta \mathbf{u}_i^{(m)}(t_n) = \frac{{u}_{i}^{(m)}(t_{n+1})-{u}^{(m)}_{i}(t_n)}{h}.\]

After solving (2.1) for each state component $i = 1, \ldots, d$, the resulting vector-valued symbolic model is assembled as 
\(
C^*(\cdot) = \big[ C_i(\cdot; \mathcal{T}, e_i^*, \omega_i^*) \big]_{i=1}^d.
\)

\begin{remark}
The component-wise formulation reduces the complexity of symbolic discovery by decomposing a \(d\)-dimensional system into 
\(d\) lower-dimensional subproblems. This effectively leads to a reduced-order learning procedure at the level of each component, improving scalability and allowing dominant interactions to be identified more accurately before assembling the full system. A detailed component-wise strategy that further exploits the structure of the interactions in high dimensional phase space will be explored in future work.
\end{remark}

\subsection{Generative Models for Stochastic Error Correction}

Since FEX models only the deterministic components of the dynamics \eqref{GeneralTurbul}, two sources of error remain: the stochastic increments left unmodeled and the model error arising from the estimated coefficients in FEX. To address these deficiencies, generative models serve as a remedy by learning and correcting these residuals following the FEX approximation.
Previous FEX studies~\cite{liang5327407identifying,liu2025training} have demonstrated that TFDM is a promising candidate for capturing long-term statistical behaviors; here, we further examine its ability to recover high-order statistics as described below. In addition, SRAN and VAE are also introduced as standard baselines for comparison. For convenience, we define the residual at each time step $t_n$, $0 \le n \le N$, as given by:
\begin{equation}
\label{residual}
 \mathbf{R}_n = \mathbf{R}(\mathbf{u}(t_n)) = \mathbf{u}(t_{n+1}) - \mathbf{u}(t_n) - h\, C^*(\mathbf{u}(t_n)),   
\end{equation}
and denote the full residual
\(
\mathbf{R} = [\mathbf{R}_0, \ldots, \mathbf{R}_{N-1}]^T.
\)
\subsubsection{Training Free Diffusion Model} 
\label{GM}
To model the distribution of \(\mathbf{R}\), TFDM 
employs a forward--reverse SDE framework on the interval $[0,1]$. In the forward process, the residual is gradually transformed into a standard Gaussian distribution, while the reverse process reconstructs samples from this Gaussian noise.
Specifically, the forward dynamics is defined as
\begin{equation}\label{forwardSDE}
dZ_{\tau} =
b(\tau)\, Z_{\tau}\, d\tau
+
q(\tau)\, dW_{\tau},
\end{equation}
where \(W_{\tau}\) denotes a standard Brownian motion. The corresponding reverse-time dynamics is
\begin{equation}\label{reverseSDE}
dZ_{\tau} =
\left[
b(\tau)\, Z_{\tau}
-
q^2(\tau)\, V(Z_{\tau},\tau)
\right] d\tau
+
q(\tau)\, dB_{\tau},
\end{equation}
where \(B_{\tau}\) represents the reverse-time Brownian motion. The boundary conditions are given by 
$Z_0 \sim p_{{R}}$ and $Z_1 \sim \mathcal N(0,\mathbf{I}_d)$,
where $p_{R}$ denotes the empirical distribution of the residual samples $\mathbf{R}$, and \(Z_0, Z_1 \in \mathbb{R}^{d}\) have same dimension. The scalar drift and diffusion coefficients \(b(\tau)\) and \(q(\tau)\) are specified by
\[
b(\tau) = \frac{d}{d\tau}\log \alpha_{\tau},
\qquad
q^2(\tau)
=
\frac{d}{d\tau}\beta_{\tau}^2
-
2\,\beta_{\tau}^2\,\frac{d}{d\tau}\log \alpha_{\tau},
\]
where 
$\alpha_{\tau} = 1-\tau,
\beta_{\tau}^2 = \tau.$
  Denote  the
score of the probability density \(p_{\tau}(z)\) of \(Z_{\tau}\) by $V(Z_{\tau},\tau)=\nabla_{z}\log p_{\tau}(z)\big|_{z=Z_{\tau}}, $ where \(V(Z_{\tau},\tau)\) is evaluated numerically using the Monte Carlo estimator described in Sections~2 and~3 of~\cite{liu2025training}. \eqref{reverseSDE} leads to the following reverse-time
Fokker–Planck equation for the density \(p_{\tau}(z)\):
\begin{equation}\label{FP}
\frac{\partial p_{\tau}(z)}{\partial \tau}
=
\nabla_z \cdot
\left[
\left(b(\tau)z-q^2(\tau)V(z,\tau)\right)p_{\tau}(z)
+\frac{q^2(\tau)}{2}\nabla_z p_{\tau}(z)
\right],
\end{equation}
together with  
\[
\nabla_z p_{\tau}(z)=p_{\tau}(z)\nabla_z \log p_{\tau}(z)
= p_{\tau}(z)V(z,\tau).
\]
\eqref{FP}  can be rewritten as a Liouville equation describing a deterministic
probability flow, which admits the following
equivalent probability flow as an ordinary differential equation (ODE):
\begin{equation}\label{ODE}
dZ_\tau =
\left[
b(\tau)Z_{\tau}-\frac{1}{2}q^2(\tau)V(Z_{\tau},\tau)
\right]d\tau.
\end{equation}

The objective for TFDM in Stage II is to learn the reverse diffusion process \eqref{reverseSDE} that maps
Gaussian noise to the residual distribution. Specifically, 
we draw samples $\boldsymbol{\xi}_1 \sim \mathcal{N}(0, \mathbf{I}_d)$ and obtain the associated targets $\mathbf{y}$ by numerically integrating \eqref{ODE} from $\tau = 1$ to $\tau = 0$. This produces training pairs $(\boldsymbol{\xi}_1, \mathbf{y})$ which are used to train a neural network $\mathbf{G}_{\theta_1}(\boldsymbol{\xi}_1)$ with trainable parameters $\theta_1$
by minimizing the expected mean-squared error over the dataset:
\begin{equation}\label{optim_N2}
\theta_1^* =
\arg\min_{\theta_1}
\mathbb{E}_{\boldsymbol{\xi}_1}
\left\|
\mathbf{G}_{\theta_1}(\boldsymbol{\xi}_1) - \mathbf{y}
\right\|_F^2,
\end{equation}
where $\|\cdot\|_F$ denotes the Frobenius norm.
The resulting network $\mathbf{G}_{{\theta_1}^*}$ serves as the TFDM estimator.

\subsubsection{Stochastic Residual Activation Network}
We next introduce a second network \(\mathbf{G}_{\theta_2}\) with trainable parameters \(\theta_2\) by inputting a Gaussian random variable \(\boldsymbol{\xi}_2 \sim \mathcal{N}(0,\mathbf{I}_d)\) and matching the distribution of  \(\mathbf{R}\)  through the following to obtain the optimal parameters \(\theta_2^*\):
\begin{equation}\label{optim2}
\theta_2^* =
\arg\min_{\theta_2}
\;
\mathbb{E}_{\boldsymbol{\xi}_2}
\big\|
\mathbf{G}_{\theta_2}(\boldsymbol{\xi}_2)-\mathbf{R}
\big\|_F^2
+
 {\lambda}
\left\|\mathbb{E}_{\boldsymbol{\xi}_2}\big[\mathbf{G}_{\theta_2}(\boldsymbol{\xi}_2)
\mathbf{G}_{\theta_2}(\boldsymbol{\xi}_2)^\top\big]
-
h\mathbf{I}_d
\right\|_F^2,
\end{equation}
where $\lambda > 0$ is a regularization parameter. The first term is a standard least-squares objective that enforces sample-wise matching but only guaranties first-order statistical consistency. Since accurate recovery of the distribution of \(\mathbf{R}\) requires matching second-order statistics, the second term explicitly regularizes the covariance of the generated samples to match the expected scaling \(h\mathbf{I}_d\), ensuring that the generated samples capture the correct density behavior. The resulting network $\mathbf{G}_{\theta_2^*}$ serves as the SRAN estimator.

\subsubsection{Variational Autoencoder}
The VAE consists of an encoder network \(\mathbf{G}_{\phi_3}\) with trainable parameters \(\phi_3\) and a decoder network \(\mathbf{G}_{\theta_3}\) with trainable parameters \(\theta_3\). 
The encoder maps \(\mathbf{R}\) to the parameters \(
(\boldsymbol{\mu},\, \boldsymbol{\Sigma})
\) of a latent Gaussian distribution \(
q_{\phi_3}(\boldsymbol{\xi}_3 \mid \mathbf R)
=
\mathcal{N}\!\bigl(\boldsymbol{\mu},\, \boldsymbol{\Sigma}\bigr)
\), which represents the approximate posterior over the latent variable \(\boldsymbol{\xi}_3\) given the residual \(\mathbf{R}\), from which latent samples  are drawn.
The decoder~\(\mathbf{G}_{\theta_3}(\boldsymbol{\xi}_3)\) then reconstructs \(\mathbf{R}\) from these latent samples. The model is trained using the following weighted objective:
\begin{equation}
\begin{split}
(\theta_3^*, \phi_3^*) = \arg\min_{\theta_3, \phi_3} 
\Bigl[ \, &\alpha_{\text{mean}} \mathbb{E}_{\boldsymbol{\xi}_3} 
\left\| \mathbf{G}_{\theta_3}(\boldsymbol{\xi}_3) - \mathbf{R} \right\|_F^2 
+ \alpha_{\text{var}} \left\| \mathbb{E}_{\boldsymbol{\xi}_3}\left[ \mathbf{G}_{\theta_3}(\boldsymbol{\xi}_3) \mathbf{G}_{\theta_3}(\boldsymbol{\xi}_3)^\top\right] - h\mathbf{I}_d \right\|_F^2 \\
&+ \beta_{\text{KL}} \mathcal{L}_{\text{KL}} \,\Bigr],
\end{split}
\end{equation}
where \(\alpha_{\text{mean}}, \alpha_{\text{var}}, \beta_{\text{KL}} > 0\) are regularization parameters. The first term enforces sample-wise reconstruction of \(\mathbf{R}\); the second regularizes the covariance of the generated samples, and the third is the Kullback–Leibler divergence between the encoder distribution \(q_{\phi_3}(\boldsymbol{\xi}_3 \mid \mathbf{R})\) and the standard Gaussian prior \(p(\boldsymbol{\xi}_3) = \mathcal{N}(0, \mathbf{I}_d)\):
\[
\mathcal{L}_{\text{KL}} = D_{\text{KL}}\left(q_{\phi_3}(\boldsymbol{\xi}_3 \mid \mathbf{R}) \,\|\, p(\boldsymbol{\xi}_3)\right) = \frac{1}{2}\left(\boldsymbol{\mu}^\top \boldsymbol{\mu} + \mathrm{tr}(\boldsymbol{\Sigma}) - \log\det\boldsymbol{\Sigma} - d\right),
\]
which regularizes the latent distribution toward the prior. The resulting trained decoder \(\mathbf{G}_{\theta_3^*}(\cdot)\) serves as the VAE estimator.

\section{Numerical Algorithms for the Two-Stage Training and Inference}\label{ASD_Implementation} 

We next describe the implementation details of coupling FEX and the generative models in this two-stage framework. In Stage II, we use TFDM as an illustrative example. For SRAN and VAE, we also provide the pseudo-algorithms in Algorithms~SM2.1 and~SM2.2 in the Supplementary Material.

\subsection{Implementation of FEX}\label{FEX_search}  

In the discovery setting, we do not assume any prior knowledge of the functional forms or coupling structure of the dynamics. Instead, guided by the general mathematical understanding that dynamical systems can be characterized by linear, nonlinear, and time-dependent components, a unified binary tree structure is adapted in Figure~\ref{fig:tree-structure} to decompose turbulent dynamics~\eqref{GeneralTurbul} accordingly. Since the binary trees in FEX can represent a broad class of symbolic expressions, each component is learned directly from data: linear terms are identified automatically during the search process, nonlinear interactions are captured through multiple one-dimensional tree blocks extended across dimensions, and time-dependent forcing is modeled through a separate one-dimensional tree block with the temporal variable, without further structural assumptions. This design yields a low-complexity symbolic representation while maintaining sufficient expressivity for the target system.

\begin{figure}[!htbp]
\begin{center}
    \includegraphics[width=1\linewidth]{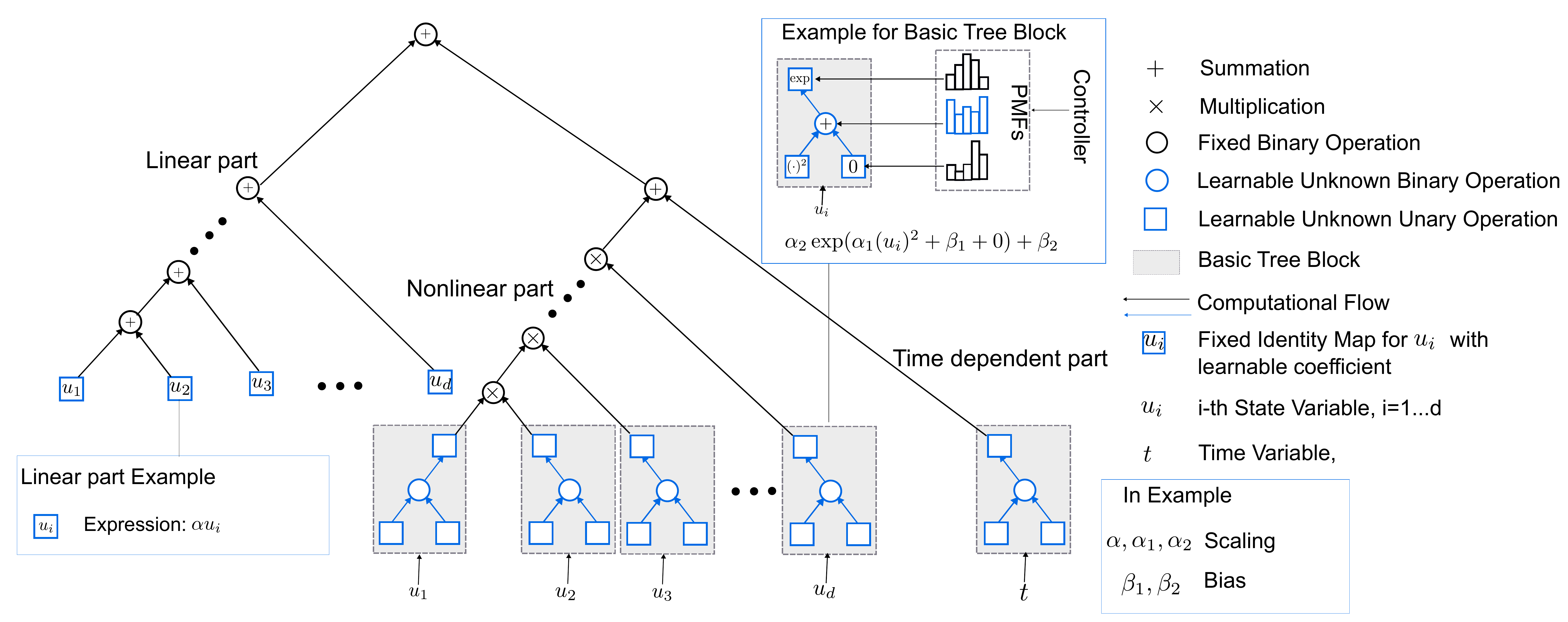}
    \caption{The structure of FEX binary tree which is formed by linear, nonlinear and time dependent part.}
\label{fig:tree-structure}
\end{center}
\end{figure}

In Figure~\ref{fig:tree-structure}, the binary operator can be chosen as 
\( \mathbb{B} = \{+, -, \times\} \), and the unary operator candidate comes from 
\(\mathbb{U} = \{\sin, \cos, \exp, 0, \mathrm{Id}, (\cdot)^2, (\cdot)^3, (\cdot)^4 \}\).
Following this configuration, for each output dimension \(i=1,\dots,d\) we recognize an aligned operator sequence 
\(\mathbf{e}_i = (e_{ij})_{j=1}^{d}\).
Each \(e_{ij}\) represents a tree block involving the \(j\)-th input dimension and contains four operator positions,
denoted by \(e_{ij} = (e_{ij1}, e_{ij2}, e_{ij3}, e_{ij4})\).
To find the optimal aligned operator sequence \(\mathbf{e}_i^{*}\) in \eqref{CO}, the implementation proceeds through four key components, which are described below.

 \paragraph{Operator Sequence Generation}
At each optimization iteration, a neural network, referred to as the controller $\chi_{\Phi}$ and parameterized by $\Phi$, generates candidate aligned operator sequences $\mathbf{e}_i$ to explore the search space. Specifically, for each position $(i,j,k)$, the controller outputs a probability mass function (PMF), i.e., a categorical distribution over the candidate operator set, from which the operator $e_{ijk}$ is sampled. To encourage exploration and avoid premature convergence, we adopt an $\epsilon$-greedy strategy: with probability $\epsilon$, $e_{ijk}$ is sampled uniformly at random, and with probability $1-\epsilon$, it is drawn according to the learned PMF.

\paragraph{Reward Computation}\label{3.1.2}

Given an aligned operator sequence $\mathbf{e}_i$, we evaluate its quality via the reward function
\[
\mathrm{Reward}(\mathbf{e}_i) = \left(1+\sqrt{L(\mathbf{e}_i)}\right)^{-1},
\]
where
\(
L(\mathbf{e}_i) = \min_{w_i} \left\{ \mathcal{L}\big(C(\cdot,\mathcal{T},\mathbf{e}_i,w_i)\big) \right\}
\)
denotes the minimum loss over $w_i$ for a fixed operator sequence $\mathbf{e}_i$, as defined in \eqref{CO}.
A higher reward indicates that the symbolic expression defined by $\mathbf{e}_i$ provides a better approximation of the target dynamics, corresponding to smaller values of $L(\mathbf{e}_i)$.
However, computing the exact minimum of $L(\mathbf{e}_i)$ is computationally expensive. Since the search stage only requires comparing candidate operator sequences, we typically trade off accuracy for efficiency by approximating $L(\mathbf{e}_i)$.

\paragraph{Controller Update}

To improve the probability of sampling operator sequences that better approximate the target dynamics, the controller distribution is updated based on the scores of the sampled candidates in Section \ref{3.1.2}. FEX adopts a policy gradient approach to maximize the following objective function~\cite{petersen2019deep}:
\begin{equation}
    \mathcal{J}(\Phi)
    =
    \mathbb{E}_{\mathbf{e}\sim \chi_{\Phi}}
    \left\{
    \mathrm{Reward}(\mathbf{e})
    \mid
    \mathrm{Reward}(\mathbf{e}) \ge \mathrm{Reward}_{v,\Phi}
    \right\}.
\end{equation}
Here $\mathrm{Reward}_{v,\Phi}$ denotes the $(1-v)\times100\%$ quantile of the score distribution under $\chi_{\Phi}$. Only sequences within the top \(v\)-fraction are used to estimate the policy gradient, which is used to update the parameter \(\Phi\) of the controller.

\paragraph{Candidate Selection}
To retain promising operator sequences discovered during the search, FEX maintains a candidate pool $\mathcal{C}$, where each sequence subsequently needs further refinement by fine-tuning the parameters using a first-order optimization method with a small learning rate. This refinement improves parameter accuracy and reduces the risk of convergence to poor local minima. Finally, FEX outputs a set of refined candidates from $\mathcal{C}$ for subsequent selection.

\subsection{Implementation of TFDM}\label{DesSZ}

An important aspect of TFDM is to efficiently recover the distribution of the residual $\mathbf{R}$. Using all samples directly is computationally costly, so we instead characterize the local distribution via a nearest-neighbor search. Specifically, 
a GPU-based implementation package, FAISS, is employed here to identify a set of neighboring samples for each 
training point, forming local clusters that represent similar residual behaviors. These 
neighborhood indices are used to construct reduced local residual datasets for TFDM 
training. 

Regarding computational cost, we note that TFDM requires training at each evaluation step to learn the residual distribution, meaning the total cost scales with the number of time steps and is therefore sensitive to the choice of step size. Furthermore, while accurate residual recovery can be achieved with relatively few samples, incorporating more data points into training improves the quality of the learned distribution at the expense of increased computational cost. These trade-offs between step size selection, data volume, and computational efficiency are important practical considerations, and further investigation into more scalable TFDM implementations is left for future work.

Having described each stage independently, we now clarify how they are coupled in practice. The two stages are linked through the residual \(\mathbf{R}\): once FEX identifies the symbolic representation \(C^*\) in Stage I, the residual is computed and passed to the generative model in Stage II for training. At inference, the learned symbolic dynamics and the generative model are combined to produce the full estimated trajectory. The complete training and inference procedures are summarized in Algorithm~\ref{alg:training_ASD_FEX_TFDM_independent} and Algorithm~\ref{alg:inference_ASD_FEX_TFDM}, respectively.

\begin{algorithm}[!htbp]
\caption{Training stage for FEX in Stage I with TFDM in Stage II}
\label{alg:training_ASD_FEX_TFDM_independent}

\textbf{Input:} Observed state trajectories 
$\{\mathbf{u}^{(m)}(t_n)\}_{m,n=1}^{M,N}$,
where $\mathbf{u}^{(m)}(t_n)\in\mathbb{R}^d$; index $K$.\\
\textbf{Output:} Estimated symbolic model $C^{*}$; estimated network model \(\mathcal{N}_{\theta_1}(\cdot)\) with the optimal parameters \(\theta_1^{*}\).

\begin{algorithmic}[1]

\For{$i = 1, \dots, d$}
    \State Minimize $\mathcal{L}(\Theta_i)$ via~\eqref{CO} following Section~\ref{FEX_search}.
\EndFor

\State Construct $\mathbf{C}^{*}$; then compute the residual $\mathbf{R}$ via~\eqref{residual}.

\State Apply FAISS-based nearest-neighbor selection to obtain a representative sample $Z_0$ from $\mathbf{R}$.

\State Sample $Z_1 \sim \mathcal{N}(0,\mathbf{I}_d)$ and set $Z_{\tau_K} = Z_1$.

\For{$k = K,\dots,1$}
    \State Estimate the score function $V(Z_{\tau_k},\tau_k)$ using the Monte Carlo method in~\cite{liu2025training}.
    \State Integrate the reverse dynamics in~\eqref{ODE} to obtain $Z_{\tau_{k-1}}$.
\EndFor

\State Set $\mathbf{y} = Z_{\tau_0}$ and construct the training pairs $(\boldsymbol{\xi}_1, \mathbf{y})$.

\State Train the neural network $\mathcal{N}_{\theta_1}(\boldsymbol{\xi}_1)$ using $(\boldsymbol{\xi}_1, \mathbf{y})$ by solving~\eqref{optim_N2}.

\end{algorithmic}
\end{algorithm}

\begin{algorithm}[!htbp]
\caption{Inference procedure for FEX in Stage I with TFDM in Stage II}
\label{alg:inference_ASD_FEX_TFDM}

\textbf{Input:} Learned symbolic model $C^{*}$; pretrained TFDM model $\mathcal{N}_{\theta_1^{*}}$; initial distribution $\rho_0$; horizon $T$; step size $h$; number of trajectories $M$.\\
\textbf{Output:} Predicted trajectories $\{\hat{\mathbf{u}}^{(m)}(t_n)\}_{m=1}^{M}$.

\begin{algorithmic}[1]
\State Set $N_1=\lfloor T/h \rfloor$.
\State Draw initial states $\hat{\mathbf{u}}^{(m)}(0) \sim \rho_0$, for $m=1,\ldots,M$.

\For{$n = 1,\ldots,N_1$}
    \For{$m = 1,\ldots,M$}
      \State Compute the deterministic increment $\Delta \hat{\mathbf{u}}_{\mathrm{det}} = h\, C^{*}(\hat{\mathbf{u}}^{(m)}, t_{n-1})$.

\State Sample a Gaussian input $\boldsymbol{\xi}^{(m)}_n \sim \mathcal{N}(\mathbf{0}, \mathbf{I}_d)$ to drive the stochastic component.

\State Predict the residual increment $\widehat{\mathbf{R}}^{(m)}_n = \mathcal{N}_{\theta_1^{*}}(\boldsymbol{\xi}^{(m)}_n, \hat{\mathbf{u}}^{(m)}, t_{n-1})$ using TFDM.

\State Update the state by combining deterministic and residual increments:
$\hat{\mathbf{u}}^{(m)} \leftarrow \hat{\mathbf{u}}^{(m)} + \Delta \hat{\mathbf{u}}_{\mathrm{det}} + \widehat{\mathbf{R}}^{(m)}_n$.
    \EndFor
\EndFor

\State \Return $\{\hat{\mathbf{u}}^{(m)}(t_n)\}_{m=1}^{M}$.
\end{algorithmic}
\end{algorithm}

\section{Theoretical  Error Estimates of the Two-Stage Approach}\label{Numerical_Analysis}
We demonstrate the theoretical properties of the two-stage framework, taking FEX and TFDM as representative components. The analysis proceeds in two stages. Given that the optimal operator sequence has already been found and is treated as fixed, in Stage I we establish that the FEX coefficient estimator is consistent and quantify its convergence rate. The key ingredient is an ergodicity assumption that ensures empirical averages along trajectories converge to expectations with respect to an invariant measure, turning the coefficient estimation into a well-posed regression problem. Consistency then follows from the stochastic increments averaging out and the approximation residual being made arbitrarily small, yielding a standard bias-variance tradeoff. In Stage II, we analyze the TFDM estimator by adopting a Gaussian mixture initialization, which provides a smoothed and well-behaved starting distribution for the reverse diffusion process. The total error decomposes into a smoothing error, a discretization error, and a network approximation error, with explicit dependence on the step size and dimension.

\begin{assumption}[Ergodicity \cite{gu2023stationary,wanner2024higher}]\label{Assumption1}
  The stochastic process $\{ \mathbf{u}(t) \}_{t \ge 0}$ admits an ergodic invariant measure $\rho$. In particular, for any integrable function $f$, and discrete time instants, $t_1<t_2<\dots<t_n$ with $t_n\rightarrow\infty$ as $n\rightarrow\infty$
\[
\lim_{T \to \infty} \frac{1}{T} \int_0^T f(\mathbf{u}(t)) \, dt 
= \lim_{N \to \infty} \frac{1}{N} \sum_{n=0}^{N-1} f(\mathbf{u}(t_n)) 
= \int_{\mathbb{R}^d} f(x)\, d\rho(x), \text{ a.s.}.
\]  
\end{assumption}

With this ergodic measure, we also naturally consider the Hilbert space $L^2(\rho)$. For any $f, g \in L^2(\rho)$, the inner product can be evaluated via time averages:
\[
\langle f, g \rangle 
= \int_{\mathbb{R}^d} f(x) g(x)\, d\rho(x) 
= \lim_{T \to \infty} \frac{1}{T} \int_0^T f(\mathbf{u}(t)) g(\mathbf{u}(t))\, dt 
= \lim_{N \to \infty} \frac{1}{N} \sum_{n=0}^{N-1} f(\mathbf{u}(t_n)) g(\mathbf{u}(t_n)).
\]

While ergodicity characterizes the global statistical behavior, the framework also requires a local description of the one-step evolution \eqref{FM1}. 
To accommodate more general stochastic dynamics beyond the additive noise case in~\eqref{FM1}, we model the system as an It\^o diffusion and assume that the stochastic increment admits a first-order It\^o–Taylor expansion.

\begin{assumption}[Strong It\^o--Taylor expansion \cite{wanner2024higher}]\label{Assumption4.3}
For the one-step decomposition in \eqref{FM1}, assume that the stochastic increment
\(
\mathbf S(t,z;h)=\bigl(S_1(t,z;h),\dots,S_d(t,z;h)\bigr)^\top
\)
admits the first-order strong It\^o--Taylor representation
\[
S_i(t,z;h)
=
\sum_{m=1}^d \sigma_{im}(\mathbf u(t),t)\,\sqrt{h}\,z_m
+
R_i(t,h),
\qquad i=1,\dots,d,
\]
where $z=(z_1,\dots,z_d)^\top\sim\mathcal N(0,\mathbf I_d)$, and the remainder \(R_i(t,h)\) satisfies
\[
\mathbb E\!\left[R_i(t,h)\mid \mathbf u(t)\right]=0,
\qquad
\mathbb E\!\left[|R_i(t,h)|^2\mid \mathbf u(t)\right]=O(h^2).
\]
\end{assumption}

\subsection{Stage I for FEX}
Although FEX represents the underlying dynamics via a compositional binary-tree structure in Figure~\ref{fig:tree-structure}, 
direct parameter-wise error analysis is intractable due to its hierarchical representation. 
So we assume that FEX successfully recovers the exact operator form of the nonlinear interaction term $\mathbf B(\mathbf u,\mathbf u)$, and focus on estimating the associated coefficients. 
This allows us to reformulate the learned model as a linear regression problem over a set of basis functions, thereby reducing the task to finite-dimensional coefficient estimation. 
To ensure the validity of this formulation, we also impose the following assumptions.

\begin{assumption}[Linear independence of basis functions]\label{Assumption4.4}
For each component \(i=1,\dots,d\), the following set of basis functions is assumed to be linearly independent in \(L^2(\rho)\):

(i) the state components $u_1,\dots,u_d$ from \(\mathbf{u}\), 

(ii) the $i$-th component of the nonlinear interaction term $\mathbf B(\mathbf u,\mathbf u)$, 

(iii) time-dependent polynomial functions $\{1,t,\dots,t^{n_1}\}$ for some $n_1>0$. 
\end{assumption}

Under  Assumption~\ref{Assumption4.4} (i)-(iii),  each component \( C_i(\mathbf u,t) \) of the deterministic part \( C(\mathbf u,t) \) in \eqref{GeneralTurbul}, for \( i=1,\dots,d \), admits a decomposition in terms of basis functions:
\begin{equation}\label{FEX_decomposition}
C_i(\mathbf u,t)
=
\sum_{r=1}^{K} \alpha_r^i \, \psi_r^i(\mathbf u,t) + r_i(t),
\end{equation}
where \( \alpha^i = (\alpha_1^i,\dots,\alpha_{K}^i)^\top \in \mathbb{R}^{K} \) denotes the coefficient vector, and \( \{ \psi_r^i \}_{r=1}^{K} \) denotes the set of functions.
Note that the coefficient vector \( \alpha^i \) is not identical to the original FEX parameters but is induced through the FEX tree. 
The term $r_i(t)$ represents the approximation residual associated with the time-dependent forcing. We assume that this residual can be made arbitrarily small, which is formalized as follows.
\begin{assumption}[Arbitrarily small residual]\label{Assumption4.5}
For each $i=1,\dots,d$ and every $\varepsilon_i>0$, there exists a choice of polynomial time-dependent basis terms $\{1,t,\dots,t^{n_1}\}$ with $n_1>0$ in \eqref{FEX_decomposition} such that the residual $r_i(t)$ satisfies
\[
\|r_i\|_{\infty,[0,T]} \le \varepsilon_i,
\qquad
\text{where }
\|r_i\|_{\infty,[0,T]}:=\sup_{t\in[0,T]}|r_i(t)|.
\]
\end{assumption}

Accordingly, the least-squares estimator in \eqref{CO} reduces to the following optimization problem:
\begin{equation}\label{easyCO}
\hat{\alpha}^i
=
\arg\min_{\alpha \in \mathbb{R}^{M_i}}
\frac{1}{2MN}
\sum_{m=1}^M \sum_{n=0}^{N-1}
\left|
\Delta u_i^{(m)}(t_n)
-
\sum_{r=1}^{K} \alpha_r \psi_r^i(\mathbf u^{(m)}(t_n),t_n)
\right|^2.
\end{equation}

We are now ready to state the following theorem, which establishes the asymptotic convergence of the estimator.

\begin{theorem}[Asymptotic convergence in probability]\label{Thm:alpha_convergence}
Under Assumptions~\ref{Assumption1}--
\ref{Assumption4.5},
let $\hat{\alpha}^i$ be the least-squares estimator defined in \eqref{easyCO}, 
constructed from $M$ independent trajectories of length $T$ with time step $h$. 
Under the limits $T \to \infty$ and $h \to 0$, we have 
\(
\hat{\alpha}^i \xrightarrow{P} \alpha^i
\), where $\alpha^i$ is the true coefficient in \eqref{FEX_decomposition}.
\end{theorem}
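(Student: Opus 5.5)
We plan to argue through the normal equations of the least-squares problem \eqref{easyCO}. Write $\Psi^i(\mathbf u,t)=(\psi_1^i,\dots,\psi_K^i)^\top$ and define the empirical Gram matrix and cross vector
\[
\hat G_N=\frac{1}{MN}\sum_{m=1}^M\sum_{n=0}^{N-1}\Psi^i\bigl(\Psi^i\bigr)^\top(\mathbf u^{(m)}(t_n),t_n),\qquad
\hat b_N=\frac{1}{MN}\sum_{m=1}^M\sum_{n=0}^{N-1}\Psi^i(\mathbf u^{(m)}(t_n),t_n)\,\Delta u_i^{(m)}(t_n),
\]
so that $\hat\alpha^i=\hat G_N^{-1}\hat b_N$ whenever $\hat G_N$ is invertible. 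Using \eqref{FM1}, Assumption~\ref{Assumption4.3} and \eqref{FEX_decomposition}, we decompose the increment as
\[
\Delta u_i^{(m)}(t_n)=\Psi^i\cdot\alpha^i+r_i(t_n)+h^{-1/2}\sum_{l}\sigma_{il}z_{l,n}^{(m)}+h^{-1}R_i(t_n,h).
\]
This gives
\[
\hat\alpha^i-\alpha^i=\hat G_N^{-1}\bigl(E_{\mathrm{res}}+E_{\mathrm{noise}}+E_{\mathrm{rem}}\bigr),
\]
where each $E$ is the empirical average of $\Psi^i$ multiplied by the corresponding term. The proof then reduces to two things: $\hat G_N$ converges to a positive definite limit, and each error vector tends to zero in probability.

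\textbf{Step 1 (Gram matrix).} Assumption~\ref{Assumption1} gives $\hat G_N\to G:=\langle\psi_r^i,\psi_s^i\rangle_{L^2(\rho)}$ almost surely as $T\to\infty$. Assumption~\ref{Assumption4.4} makes $G$ positive definite. By continuity of inversion, $\hat G_N^{-1}\to G^{-1}$, and in particular $\|\hat G_N^{-1}\|$ is bounded in probability. The time-polynomial basis functions are not functions of the state alone. We intend to treat them by regarding $(\mathbf u,t)$ on the window $[0,T]$ as in the paper's setting, that is, reading Assumption~\ref{Assumption4.4}(iii) as nondegeneracy of the limiting Gram matrix. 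This interpretive point should be flagged.

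\textbf{Step 2 (error terms).}
\begin{itemize}[leftmargin=*]
\item The residual term satisfies $\|E_{\mathrm{res}}\|\le\varepsilon_i\,\frac{1}{MN}\sum|\Psi^i|$. By ergodicity this is bounded by $\varepsilon_i\|\Psi^i\|_{L^1(\rho)}+o(1)$, and Assumption~\ref{Assumption4.5} makes it arbitrarily small.
\item For the noise term, $z_n^{(m)}$ is independent of $\mathbf u^{(m)}(t_n)$ and has mean zero. Hence the summands $\Psi^i\sigma z h^{-1/2}$ form a martingale difference sequence, with conditional variance of order $h^{-1}\,\mathbb E|\Psi^i\sigma|^2$. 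Summing the orthogonal increments over the $MN$ samples, $N=T/h$, gives
\[
\mathbb E\|E_{\mathrm{noise}}\|^2\lesssim \frac{1}{MN h}\,\|\Psi^i\sigma\|_{L^2(\rho)}^2=\frac{C}{MT}\to0.
\]
\item For the remainder term, the conditional mean of $R_i$ is zero and its conditional second moment is $O(h^2)$. The same martingale argument therefore gives $\mathbb E\|E_{\mathrm{rem}}\|^2\lesssim \frac{1}{MN}\cdot\frac{h^2}{h^2}=O(h/(MT))\to0$.
\end{itemize}
Chebyshev's inequality then yields convergence in probability of both stochastic terms.

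\textbf{Step 3 (conclusion).} We combine the pieces with Slutsky's theorem. For every $\delta>0$, choose $\varepsilon_i$ small so that the $E_{\mathrm{res}}$ contribution is below $\delta/2$, then let $T\to\infty$ and $h\to0$. This gives $\hat\alpha^i\xrightarrow{P}\alpha^i$.

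\textbf{Main obstacle.} Step 2 is where we expect the difficulty, because ergodic averages are needed jointly in $T\to\infty$ and $h\to0$: the sampled points $\mathbf u(t_n)$ depend on $h$. We plan to use the equality of continuous and discrete time averages asserted in Assumption~\ref{Assumption1}, together with finite second moments of $\Psi^i$ and $\Psi^i\sigma$ under $\rho$, which we will state explicitly as integrability conditions.
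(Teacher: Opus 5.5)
Your proposal is correct and follows essentially the same route as the paper: normal equations, convergence of the empirical Gram matrix to the positive definite $G_i$ via ergodicity and linear independence, the residual contribution made negligible by the arbitrarily-small-residual assumption, and the stochastic term shown to vanish through zero conditional mean, an $O(1/(MNh))=O(1/(MT))$ variance bound and Chebyshev, finished by continuity of matrix inversion. The extra features of your version refine that argument rather than change it: you split $S_i/h$ into the Gaussian part and the It\^o--Taylor remainder, justify the variance bound by martingale-difference orthogonality, and flag the time-polynomial basis and the joint $T\to\infty$, $h\to0$ limit.
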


\begin{proof}
For each fixed $i=1,\dots,d$, define
\[
Y_{m,n}^i:=\frac{\Delta u_i^{(m)}(t_n)}{h}, \qquad
\Psi_{m,n}^i
:=
\bigl(\psi_1^i(\mathbf u^{(m)}(t_n),t_n),\dots,\psi_{K}^i(\mathbf u^{(m)}(t_n),t_n)\bigr)^\top .
\]
With these definitions, the least-squares estimator in \eqref{easyCO} satisfies the normal equation
\[
\hat\alpha^i
=
\Bigl(\frac{1}{MN}\sum_{m=1}^M\sum_{n=0}^{N-1}\Psi_{m,n}^i(\Psi_{m,n}^i)^\top\Bigr)^{-1}
\Bigl(\frac{1}{MN}\sum_{m=1}^M\sum_{n=0}^{N-1}\Psi_{m,n}^i\,Y_{m,n}^i\Bigr),
\]
provided the empirical Gram matrix is invertible.

By the linear independence of \(\{\psi_r^i\}_{r=1}^K\) in \(L^2(\rho)\)
from Assumption~\ref{Assumption4.4}, the Gram matrix \(G_i\) with entries \((G_i)_{rs} := \langle \psi_r^i, \psi_s^i\rangle\) is positive definite and hence invertible.
Moreover, by the convergence of empirical averages under Assumption~\ref{Assumption1},
\[
\frac{1}{MN}\sum_{m=1}^M\sum_{n=0}^{N-1}\Psi_{m,n}^i(\Psi_{m,n}^i)^\top
\xrightarrow{P} G_i .
\]

Next, by \eqref{FM1} and \eqref{FEX_decomposition},
\[
Y_{m,n}^i
=
(\alpha^i)^\top \Psi_{m,n}^i
+
{r_i(t_n)}
+
\frac{S_i(t_n,z;h)}{h}.
\]
Therefore,
\[
\frac{1}{MN}\sum_{m=1}^M\sum_{n=0}^{N-1}\Psi_{m,n}^i\,Y_{m,n}^i
=
\Bigl(\frac{1}{MN}\sum_{m=1}^M\sum_{n=0}^{N-1}\Psi_{m,n}^i(\Psi_{m,n}^i)^\top\Bigr)\alpha^i
+
\mathcal R_{M,N}^i
+
\mathcal S_{M,N}^i,
\]
where
\[
\mathcal R_{M,N}^i
:=
\frac{1}{MN}\sum_{m=1}^M\sum_{n=0}^{N-1}\Psi_{m,n}^i\,{r_i(t_n)},
\qquad
\mathcal S_{M,N}^i
:=
\frac{1}{MN}\sum_{m=1}^M\sum_{n=0}^{N-1}\Psi_{m,n}^i\,\frac{S_i(t_n,z;h)}{h}.
\]

By the arbitrarily small bound of \(r_i\)
in Assumption~\ref{Assumption4.5}, 
$\mathcal R_{M,N}^i$ is negligible in the limit. From the first condition of Assumption~\ref{Assumption4.3}, the term \(\Psi_{m,n}^i \cdot S_i/h\) have zero conditional mean, and the variance of the average \(\mathcal{S}_{M,N}^i\)
is of order \(O(1/MNh) = O(1/(MT))\), which vanishes as \(T\to\infty\). Hence, by Chebyshev's inequality, \(\mathcal S_{M,N}^i \xrightarrow{P} 0\).

It follows that
\[
\frac{1}{MN}\sum_{m=1}^M\sum_{n=0}^{N-1}\Psi_{m,n}^i\,Y_{m,n}^i
\xrightarrow{P}
G_i\alpha^i .
\]
Combining this with the convergence of the empirical Gram matrix and using continuity of matrix inversion on nonsingular matrices, we obtain
\[
\hat\alpha^i
\xrightarrow{P}
G_i^{-1}(G_i\alpha^i)
=
\alpha^i .
\]
This proves the theorem.
\end{proof}

To further quantify the estimation error, define the normalized one-step regression error
\begin{equation}\label{regression_error}
 \xi_i^{(m)}(t_n)
:=
\frac{\Delta u_i^{(m)}(t_n)}{h}
-
C_i(\mathbf u^{(m)}(t_n),t_n),\qquad
m=1,\dots,M,\quad n=0,\dots,N-1.   
\end{equation}

Let \(\Gamma^i\in\mathbb R^{MN\times K}\) denote the design matrix whose \((m,n)\)-th row is \((\Psi_{m,n}^i)^\top\), and let \(\xi^i\in\mathbb R^{MN}\) be the stacked vector with entries \(\xi^i_{(m,n)}=\xi_i^{(m)}(t_n)\). In this formulation, the coefficient error can be written in closed form, yielding the following estimate.

\begin{theorem}[Coefficient error bound for FEX]\label{thm:coef_error}
Under Assumptions~\ref{Assumption1}--\ref{Assumption4.5}, for each component \(i=1,\dots,d\), the least-squares estimator satisfies
\[
\hat\alpha^i-\alpha^i
=
\left(G_i^{-1}+o(1)\right)
\left(
\frac{1}{MN}(\Gamma^i)^\top (r^i+\xi^i)
\right),
\]
where \(G_i\) is the population Gram matrix with entries \((G_i)_{rs} := \langle \psi_r^i, \psi_s^i\rangle\), and $o(1)$ denotes a term that converges to zero in probability as the sample size $MN \to \infty$. In particular, if the approximation residual is chosen so that its contribution is negligible and the It\^o--Taylor remainder is of order \(h\), 
\[
\bigl\|\mathbb E[\hat\alpha^i-\alpha^i]\bigr\|
\le
C_i\bigl(\varepsilon_i+h+o(1)\bigr).
\]
while its variance satisfies \[ \mathrm{Var}(\hat\alpha^i) \le \frac{\widetilde C_i}{T}\bigl(1+o(1)\bigr), \qquad T=Nh, \] for some constants \(C_i,\widetilde C_i>0\) depending only on \(G_i\) and the chosen basis.
\end{theorem}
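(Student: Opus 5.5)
The plan is to work directly from the normal equation already used in the proof of Theorem~\ref{Thm:alpha_convergence}. Write the stacked response as $Y^i=\Gamma^i\alpha^i+r^i+\xi^i$. Here $r^i$ stacks the values $r_i(t_n)$ from \eqref{FEX_decomposition}. The vector $\xi^i$ stacks the normalized regression errors \eqref{regression_error}, after the true drift minus its basis expansion has been moved into $r^i$.

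Then $\hat\alpha^i=((\Gamma^i)^\top\Gamma^i)^{-1}(\Gamma^i)^\top Y^i$, which gives the exact identity
\[
\hat\alpha^i-\alpha^i=\Bigl(\tfrac{1}{MN}(\Gamma^i)^\top\Gamma^i\Bigr)^{-1}\Bigl(\tfrac{1}{MN}(\Gamma^i)^\top(r^i+\xi^i)\Bigr).
\]
By Assumption~\ref{Assumption1}, $\frac{1}{MN}(\Gamma^i)^\top\Gamma^i\xrightarrow{P}G_i$. By Assumption~\ref{Assumption4.4}, $G_i$ is positive definite. Inversion is continuous at nonsingular matrices, so the first factor equals $G_i^{-1}+o(1)$ in probability. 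This proves the displayed representation.

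For the bias bound I split the second factor into its $r^i$ part and its $\xi^i$ part.

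\textbf{The $r^i$ part.} By Assumption~\ref{Assumption4.5}, $|r_i(t_n)|\le\varepsilon_i$. Ergodic averages of $|\psi_r^i|$ are bounded by $\|\psi_r^i\|_{L^2(\rho)}$ up to $o(1)$. Hence this part is bounded by $C\varepsilon_i$.

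\textbf{The $\xi^i$ part.} By \eqref{FM1} and Assumption~\ref{Assumption4.3}, $\xi_i^{(m)}(t_n)=S_i/h$ plus a drift-discretization term. This discretization term is the gap between the time-averaged drift over $[t_n,t_{n+1}]$ and its left-endpoint value, and it is $O(h)$ under the stated hypothesis that the It\^o--Taylor remainder is of order $h$. The leading Gaussian term $\sigma\sqrt h z/h$ and the remainder $R_i$ both have zero conditional mean given $\mathbf u(t_n)$, while $\Psi^i_{m,n}$ is $\mathbf u(t_n)$-measurable. Conditioning on $\mathbf u(t_n)$ and using the tower property therefore kills their contribution to the expectation.

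Taking expectations and bounding the product with the random inverse factor by $\|G_i^{-1}\|$ plus $o(1)$ then yields $\|\mathbb E[\hat\alpha^i-\alpha^i]\|\le C_i(\varepsilon_i+h+o(1))$. A truncation or uniform-integrability argument is needed to pass the $o(1)$ through the expectation.

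For the variance, the dominant fluctuation is $\frac{1}{MN}\sum\Psi^i_{m,n}\,\sigma\sqrt h z_{m,n}/h$. Its summands form a martingale difference sequence in $n$ with respect to the filtration of the Brownian increments, and they are independent across $m$. The cross terms therefore vanish. Each squared term has conditional second moment $|\Psi|^2|\sigma|^2/h$, with $|\sigma|^2$ understood as the conditional variance $\sum_m\sigma_{im}^2$ from Assumption~\ref{Assumption4.3}. Summing gives
\[
\frac{1}{(MN)^2}\cdot MN\cdot\frac{\mathbb E|\Psi|^2|\sigma|^2}{h}=\frac{\mathbb E_\rho|\Psi|^2|\sigma|^2}{M\,Nh}\le\frac{\widetilde C_i}{T}.
\]
The $R_i$ remainder contributes at most $O(1/(MN))=O(h/T)$, and the $r^i$ term is deterministic. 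Multiplying by $G_i^{-1}+o(1)$ gives $\mathrm{Var}(\hat\alpha^i)\le\frac{\widetilde C_i}{T}(1+o(1))$, with $\widetilde C_i$ depending on $G_i^{-1}$, the basis, and second moments of $\sigma$ under $\rho$.

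The main obstacle is making the expectation and variance statements rigorous in the presence of the random inverse Gram factor. The $o(1)$ holds only in probability, and it is correlated with the noise. I would handle this on the high-probability event where the empirical Gram matrix is within $\lambda_{\min}(G_i)/2$ of $G_i$, and show the complement contributes $o(1)$ by assuming bounded moments of $\psi^i_r$ under $\rho$. The identification of the drift-discretization error as $O(h)$ also relies on the statement's hypothesis rather than being derived, so I would invoke it explicitly.
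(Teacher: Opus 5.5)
Your argument is correct in substance and follows the paper's skeleton: the exact normal-equation identity, convergence of the empirical Gram matrix via Assumptions~\ref{Assumption1} and~\ref{Assumption4.4} plus continuity of inversion, separate treatment of $r^i$ and $\xi^i$, and the tower property to remove the noise from the bias. It departs from the supplementary proof at the variance step, and there your route is the one that works. The paper bounds $\bigl\|\frac{1}{MN}\sum_{m,n}\Psi^i_{m,n}\xi_i^{(m)}(t_n)\bigr\|^2$ by Cauchy--Schwarz, $\le\frac{1}{MN}\sum_{m,n}\|\Psi^i_{m,n}\|^2|\xi_i^{(m)}(t_n)|^2$. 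It then records the expectation as $\frac{1}{MN}C_\psi^2\,O(1/h)$. But an average of $MN$ terms, each of size $O(1/h)$, is $O(1/h)$, so that route loses the factor $MN$ and cannot give the $1/T$ rate.

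Your martingale-difference orthogonality is exactly what yields $O(1/(MNh))=O(1/(MT))$: cross terms in $n$ vanish because $\Psi^i_{m,n}$ is $\mathcal F_{t_n}$-measurable while the increment has zero conditional mean, and trajectories are independent in $m$. For the $R_i$ part to contribute only $O(1/(MN))$, you need $\mathbb E[R_i(t_n,h)\mid\mathcal F_{t_n}]=0$. This follows from Assumption~\ref{Assumption4.3} together with the Markov property.

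In the bias, the paper uses that the data are generated exactly by \eqref{FM1}. So $\xi_i^{(m)}(t_n)=S_i/h$, and its bound is actually $C_i(\varepsilon_i+o(1))$ with no $h$. Your $O(h)$ drift-discretization term is what continuous-time data would produce; it explains the $h$ in the statement but is identically zero in the paper's setting. For the $r^i$ part you use ergodic averages and $\|\psi_r^i\|_{L^2(\rho)}$, whereas the paper asserts a uniform bound $\|\Psi^i_{m,n}\|\le C_\psi$ that square-integrability does not supply. Finally, the paper simply pulls $G_i^{-1}+o(1)$ out of the expectation. You correctly observe that this $o(1)$ is random and correlated with the noise.

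One caveat on your fix for that last point. Write $\widehat G_i=\frac{1}{MN}(\Gamma^i)^\top\Gamma^i$ and $E=\{\|\widehat G_i-G_i\|\le\lambda_{\min}(G_i)/2\}$. The good-event part is fine: on $E$, $\|\widehat G_i^{-1}-G_i^{-1}\|\le C\|\widehat G_i-G_i\|$, and Cauchy--Schwarz against the $O(1/\sqrt{MT})$ noise term gives $o(1)$. But bounded moments of $\psi_r^i$ under $\rho$ do not control $E^c$. There $\|\hat\alpha^i-\alpha^i\|$ is only bounded by $\lambda_{\min}\bigl((\Gamma^i)^\top\Gamma^i\bigr)^{-1/2}\|r^i+\xi^i\|$, which can be arbitrarily large. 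You need one of the following:
negative moments of the smallest eigenvalue of the empirical Gram matrix (an anti-concentration estimate);
an estimator defined by truncation on $E^c$;
or bias and variance bounds stated conditionally on $E$.
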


\begin{proof}
See \cite{frishman2020learning,gu2023stationary,wanner2024higher} for related proofs. 
Full details are provided in Supplementary Material SM1.1.1.
\end{proof}

\subsection{Stage II for TFDM}
To model the stochastic component, we need to characterize the distribution of the residual variable. But the empirical residual dataset is discrete and typically suffers from limited sample size, which may lead to variance underestimation and poor generalization. To address this issue, we adopt a Gaussian mixture initialization, which can be viewed as an inflation strategy \cite{wang2026error} widely used in uncertainty quantification. This approach introduces controlled smoothing to the empirical residual distribution, preventing variance collapse while ensuring a well-behaved score function throughout the reverse diffusion process.
 
\begin{assumption}[Gaussian mixture initialization]\label{Assumption4.6}
Given the residual $\mathbf{R} = [\mathbf{R}_0, \ldots, \mathbf{R}_{N-1}]^T $, 
we assume that the probability density of the residual variable $Z_0$ in ODE flow \eqref{ODE} at $t=0$ is given by
\begin{equation}\label{GMM_residual}
q_{Z_0}(z_0)
=
\sum_{n=0}^{N-1} P_\xi(n)\, p_{Z_0\mid \xi}(z_0\mid n)
=
\frac{1}{N}\sum_{n=0}^{N-1}\phi(z_0;\mathbf{R}_n,\boldsymbol{\Sigma}),
\end{equation}
where $\xi$ is a discrete random variable taking values in $\{0,\dots,N-1\}$ with 
$P_\xi(n)=1/N$, and
\(
p_{Z_0\mid \xi}(z_0\mid n)=\phi(z_0;\mathbf{R}_n,\boldsymbol{\Sigma}).
\)
Here, $\boldsymbol{\Sigma}$ is a positive definite covariance matrix that provides a Gaussian smoothing of the empirical residual distribution.
\end{assumption}

As a consequence, the residual samples admit a uniform bound, as stated in the following lemma.

\begin{lemma}[Bounded training residual set]\label{lemma:bounded_training_residual}
Under Assumptions~\ref{Assumption1}--\ref{Assumption4.5}, there exists a constant $M_R<\infty$ such that
\[
\|\mathbf{R}_n\|_2 \le M_R,\qquad n=0,\dots,N-1.
\]
Consequently, the residual training dataset $\mathbf R=[\mathbf{R}_0,\dots,\mathbf{R}_{N-1}]^\top$ is contained in an $\ell_2$-ball of radius $M_R$.
\end{lemma}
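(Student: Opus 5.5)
We will bound each residual directly from its definition \eqref{residual}, splitting it into a drift-mismatch part and a stochastic-increment part. By \eqref{FM1} and Assumption~\ref{Assumption4.3},
\[
\mathbf R_n = h\bigl(\mathbf C(\mathbf u(t_n),t_n)-C^*(\mathbf u(t_n),t_n)\bigr) + \mathbf S(t_n,z_n;h).
\]
Using the decomposition \eqref{FEX_decomposition} and the assumption that FEX recovers the exact operator form, the drift mismatch has components $\sum_r(\alpha^i_r-\hat\alpha^i_r)\psi^i_r(\mathbf u(t_n),t_n)+r_i(t_n)$. So it suffices to bound three quantities uniformly in $n$: the coefficient errors, the basis functions along the observed trajectory, and the stochastic increment.

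\textbf{Coefficient and residual terms.} The coefficient errors are finite; for fixed data they are a deterministic vector, and by Theorem~\ref{Thm:alpha_convergence} and Theorem~\ref{thm:coef_error} they are small with high probability. Assumption~\ref{Assumption4.5} gives $|r_i(t_n)|\le\varepsilon_i$ on $[0,T]$.

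\textbf{Basis functions along the trajectory.} The basis functions are linear monomials $u_j$, quadratic terms $\mathbf B_i(\mathbf u,\mathbf u)$, and polynomials in $t\in[0,T]$. Hence it suffices to have $\max_{n\le N}\|\mathbf u(t_n)\|_2<\infty$. This holds trivially for a finite observed dataset: finitely many finite values always have a finite maximum. That observation already yields the lemma for the given dataset, with $M_R$ depending on the realization.

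\textbf{Stochastic term.} For $\mathbf S$, write $\|\mathbf S(t_n,z_n;h)\|\le\|\boldsymbol\sigma\|\sqrt h\,\|z_n\|+\|R(t_n,h)\|$. Each term is again finite for the finitely many realized samples.

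\textbf{Main obstacle.} The difficulty is making $M_R$ meaningful, i.e.\ independent of the realization, or at least controlled as $N$ grows. Gaussian increments are unbounded, so a deterministic uniform bound is false in general. The plan is to state the constant as realization-dependent but almost surely finite, which is what the finite-sample statement requires.

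For a quantitative version, one could proceed as follows. First, use ergodicity (Assumption~\ref{Assumption1}) with $f=\|\mathbf u\|^p$ to control the moments of the trajectory. Next, apply a union bound to the $N$ Gaussian draws, giving $\max_n\|z_n\|\lesssim\sqrt{d+\log N}$ with high probability. Combining these gives
\[
M_R \lesssim h\,\bigl(\|\alpha^i-\hat\alpha^i\|\,\max_n\|\Psi_n\|+\varepsilon\bigr)+\sqrt{h(d+\log N)}
\]
with high probability. We would present the almost-sure finite-set argument as the proof and remark on this high-probability bound.
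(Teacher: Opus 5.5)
Your proposal is correct and follows essentially the same route as the paper. Like the paper, you split $\mathbf R_n$ via \eqref{FM1} and \eqref{residual} into the drift mismatch $h(\mathbf C-C^*)$ plus the stochastic increment $\mathbf S$, apply the triangle inequality, and bound both pieces uniformly over the finitely many training samples to get $M_R=hM_C+M_S$. Your justification is actually more careful than the paper's, which claims the assumptions keep the state in a bounded region; you correctly observe that, because Gaussian increments are unbounded, the constant can only be realization-dependent (almost surely finite), which is the only reading under which the lemma holds.
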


\begin{proof}
 Full details are provided in the Supplementary Material SM1.2.1.
\end{proof}

From this, we obtain the following error bound.

\begin{theorem}
Let $\mathbf y^{*}(\boldsymbol{\xi}_1)$ denote the continuous exact solution of \eqref{ODE} associated with input $\boldsymbol{\xi}_1 \sim \mathcal{N}(0,\mathbf I_d)$ on the closed interval \([0,1]\). 
Assume that the initialization corresponds to a Gaussian smoothing with covariance matrix $\Sigma = C_0^2 \mathbf I_d$ for some constant $C_0>0$. 
Let $n_2$ denote the number of training pairs $(\boldsymbol{\xi}_1,\mathbf y)$, and let $\mathcal{N}_{\theta_1}(\boldsymbol{\xi}_1)$ be the neural network in TFDM with trainable parameters $\theta_1$.
Assume further that the network results is bounded (i.e. 
\(
\mathbb{E}_{\boldsymbol{\xi}_1}\!\left[\left\|\mathcal{N}_{\theta_1}(\boldsymbol{\xi}_1)\right\|_2\right] < C_2.
\) for some \(C_2>0\).)
The total error satisfies
\[
\mathbb E_{\boldsymbol{\xi}_1}\!\left[
\|\mathbf y^{*}(\boldsymbol{\xi}_1)-\mathcal N_{\theta_1}(\boldsymbol{\xi}_1)\|_2
\right]
\le
C_0\sqrt d + C_1 d h + C_3,
\]
where $C_0, C_1, C_3 > 0$ are constants independent of step size $h$ and dimension $d$.
\end{theorem}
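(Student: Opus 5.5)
The bound comes from splitting the error into three pieces: smoothing, time discretization, and network approximation. Let $\mathbf y^{\mathrm{emp}}(\boldsymbol\xi_1)$ be the endpoint at $\tau=0$ of the exact probability-flow ODE \eqref{ODE} started from $\boldsymbol\xi_1$. Its score is built from the unsmoothed empirical residual law $p_R$, approximated by a narrow smoothing so that the flow is defined. Let $\mathbf y^{\Sigma}$ be the same endpoint when the initial law is the Gaussian mixture $q_{Z_0}$ of Assumption~\ref{Assumption4.6} with $\Sigma=C_0^2\mathbf I_d$. Let $\mathbf y$ be the output of the discretized reverse integration with $K$ steps of size $h$, as in Algorithm~\ref{alg:training_ASD_FEX_TFDM_independent}. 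The triangle inequality then gives
\[
\mathbb E\|\mathbf y^*-\mathcal N_{\theta_1}\|_2\le \mathbb E\|\mathbf y^*-\mathbf y^{\Sigma}\|_2+\mathbb E\|\mathbf y^{\Sigma}-\mathbf y\|_2+\mathbb E\|\mathbf y-\mathcal N_{\theta_1}\|_2 ,
\]
and I will bound the three terms by $C_0\sqrt d$, $C_1dh$ and $C_3$ respectively.

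\textbf{Smoothing term.} The mixture $q_{Z_0}$ is the law of $\mathbf R_\xi+C_0\boldsymbol\eta$ with $\boldsymbol\eta\sim\mathcal N(0,\mathbf I_d)$ independent of $\xi$. This is a coupling with the empirical law whose transport cost satisfies
\[
\mathbb E\|C_0\boldsymbol\eta\|_2\le C_0\sqrt d .
\]
I would interpret $\mathbf y^*$ as the sample generated from the smoothed target via this coupling. The flow-map perturbation is then controlled by the Wasserstein-1 distance between the two initial laws, which gives the $C_0\sqrt d$ term. \emph{This is the step where I am least sure the interpretation is rigorous}, since a flow map need not be contractive in general. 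If a genuine stability estimate is needed, I would use that the linear interpolant $\alpha_\tau,\beta_\tau$ makes the flow an exact transport between $\mathcal N(0,\mathbf I_d)$ and the $\tau=0$ law.

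\textbf{Discretization term.} Lemma~\ref{lemma:bounded_training_residual} puts all mixture centers in a ball of radius $M_R$. For a Gaussian mixture with common covariance, the score $V(z,\tau)$ is explicit: a softmax-weighted combination of terms $(\alpha_\tau\mathbf R_n-z)/(\alpha_\tau^2C_0^2+\beta_\tau^2)$. The denominator is bounded below uniformly in $\tau\in[0,1]$ because $C_0>0$, so:
\begin{itemize}
\item $V$ is Lipschitz in $z$ with a constant depending only on $C_0$ and $M_R$;
\item $\partial_\tau V$ and $\nabla_z V$ are bounded by constants growing at most linearly in $\|z\|$.
\end{itemize}
The drift of \eqref{ODE} is therefore globally Lipschitz with a bounded time derivative along trajectories. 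A standard one-step Euler local error of $O(h^2)$ combined with Gr\"onwall over $K=1/h$ steps gives a global error $O(h)$. The factor $d$ enters through $\mathbb E\|Z_\tau\|_2^2\lesssim M_R^2+d$, which controls the growth of the trajectories. Monte Carlo score error from \cite{liu2025training} can be absorbed into the same constant.

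\textbf{Network term.} I would use the boundedness assumption $\mathbb E\|\mathcal N_{\theta_1}\|_2<C_2$ directly, together with a uniform bound on $\mathbb E\|\mathbf y\|_2$. The latter follows since $\mathbf y$ stays near a bounded residual ball enlarged by the smoothing. Then
\[
\mathbb E\|\mathbf y-\mathcal N_{\theta_1}\|_2\le C_2+M_R+O(C_0\sqrt d+dh),
\]
which can be folded into $C_3$ plus adjustments of the other two constants. If training to small loss in \eqref{optim_N2} is available, Jensen's inequality instead gives the sharper bound $\sqrt{\text{loss}}$.

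The main obstacle is the discretization step: obtaining Lipschitz and time-regularity bounds on the mixture score uniform on $[0,1]$ that are independent of $h$ and at most linear in $d$. I would attack it first via the explicit softmax formula, bounding the weighted covariance of the centers by $M_R^2$.
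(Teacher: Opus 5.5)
Your three-way split is the paper's decomposition $I_1+I_2+I_3$ (smoothing, discretization, network), and you use the same bounds: $I_1\le\sqrt{\operatorname{tr}\Sigma}=C_0\sqrt d$, $I_2\le C_1dh$, and $I_3\le\mathbb E\|\mathbf y\|_2+C_2$. The one difference is that for $I_2$ the paper cites Theorem~3.8 of \cite{wang2026error}, while you sketch an Euler--Gr\"onwall argument from the explicit mixture score.

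The step you flag is a genuine hole, and the paper does not close it either; it simply asserts $I_1\le\sqrt{\operatorname{tr}\Sigma}$. The endpoints $\mathbf y^{\mathrm{emp}}(\boldsymbol\xi_1)$ and $\mathbf y^{\Sigma}(\boldsymbol\xi_1)$ are two different deterministic images of the same $\boldsymbol\xi_1$, so they realize one specific coupling of $p_R$ and $q_{Z_0}$. Your estimate $\mathbb E\|C_0\boldsymbol\eta\|_2\le C_0\sqrt d$ only bounds $W_1$, i.e.\ the best coupling, so without a stability estimate for the flow map with respect to its terminal law nothing follows. What does work is the fact you mention last. The exact flow pushes $\mathcal N(0,\mathbf I_d)$ forward to the $\tau=0$ law, so $\mathbf y^{\Sigma}\sim q_{Z_0}$, and Lemma~\ref{lemma:bounded_training_residual} gives $\mathbb E\|\mathbf y^{\Sigma}\|_2\le M_R+C_0\sqrt d$. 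Meanwhile $\mathbf y^{\mathrm{emp}}$ lies, up to your regularization, in the ball of radius $M_R$. Hence the smoothing term is at most $C_0\sqrt d+2M_R$, and the $2M_R$ can be moved into $C_3$.

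Pushing this one step further makes the statement a one-liner. Whichever reading of $\mathbf y^*$ you adopt, $\mathbb E\|\mathbf y^*-\mathcal N_{\theta_1}\|_2\le\mathbb E\|\mathbf y^*\|_2+C_2\le C_0\sqrt d+M_R+C_2$, so $C_3=M_R+C_2$ works for any $C_1>0$. Consequences:
\begin{enumerate}
\item The discretization analysis you call the main obstacle is not needed for the inequality as stated. It would only matter for a bound without an $O(1)$ additive constant.
\item The one-liner pays $C_0\sqrt d$ once. Your network term produces a second $O(C_0\sqrt d)$, which cannot be folded into $C_0$ when $C_0$ is the prescribed smoothing width. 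The paper avoids this only by asserting $\mathbb E\|\mathbf y^K\|_2\le\widetilde C$ independently of $d$, which is doubtful since $q_{Z_0}$ has mean norm of order $C_0\sqrt d$.
\end{enumerate}

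If you do keep the Euler estimate, note that with $\alpha_\tau=1-\tau$ and $\beta_\tau^2=\tau$ you get $b(\tau)=-1/(1-\tau)$ and $q^2(\tau)=(1+\tau)/(1-\tau)$. Both are singular at $\tau=1$, exactly where the integration starts. So a Lipschitz score alone does not give a Lipschitz drift, and a naive Gr\"onwall bound picks up $\int_0^1(1-\tau)^{-1}\,d\tau=\infty$. The uniform bound comes from the exact cancellation
\[
b(\tau)z-\tfrac{1}{2}q^2(\tau)V(z,\tau)=\frac{1-2(1-\tau)C_0^2}{2s_\tau^2}\,z-\frac{1+\tau}{2s_\tau^2}\,\bar{\mathbf R}(z,\tau),\qquad s_\tau^2=(1-\tau)^2C_0^2+\tau,
\]
where $\bar{\mathbf R}(z,\tau)$ is the softmax-weighted average of the centers $\mathbf R_n$. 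Finally, the Monte Carlo score error does not scale with $h$, so it can only be absorbed into $C_3$, not into $C_1dh$.
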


\begin{proof}
 Full details are provided in the Supplementary Material SM1.2.2.
\end{proof}

\section{Numerical Experiments}\label{Numerical_Experiment} 
For experimental evaluations, we adopt the stochastic triad model with state $\mathbf{u}=(u_1,u_2,u_3)$, a three-dimensional dynamical system obtained as a Galerkin truncation of a broader class of nonlinear systems. More specifically, it serves as a structured three-dimensional realization of \eqref{GeneralTurbul}, where the linear operators and quadratic interaction terms are explicitly parameterized. Despite its low dimensionality, this triad model retains the essential mechanism of nonlinear energy exchange among interacting modes and therefore provides a canonical testbed for investigating turbulent energy transfer and statistical behavior~\cite{gluhovsky1997interpretation,majda2002priori,majda2016introduction,randall2007climate}. The system is given by
\begin{equation}\label{Triad}
\left\{\begin{aligned}
\frac{du_1}{dt} &= L_2 u_3 - L_3 u_2 - d_1 u_1 + B_1 u_2 u_3 + F_1(t) + \sigma_1 \dot{W}_1, \\
\frac{du_2}{dt} &= L_3 u_1 - L_1 u_3 - d_2 u_2 + B_2 u_3 u_1 + F_2(t) + \sigma_2 \dot{W}_2, \\
\frac{du_3}{dt} &= L_1 u_2 - L_2 u_1 - d_3 u_3 + B_3 u_1 u_2 + F_3(t) + \sigma_3 \dot{W}_3.
\end{aligned}\right.
\end{equation}
The linear terms in \eqref{Triad} correspond to the operators in \eqref{GeneralTurbul}, with matrix representations
\[
\mathbf{L} =
\begin{bmatrix}
0 & -L_3 & L_2\\
L_3 & 0 & -L_1\\
-L_2 & L_1 & 0
\end{bmatrix},
\qquad
\mathbf{D} =
\begin{bmatrix}
-d_1 & 0 & 0\\
0 & -d_2 & 0\\
0 & 0 & -d_3
\end{bmatrix}.
\]
The quadratic nonlinear interaction term is
\(
\mathbf{B}(\mathbf{u},\mathbf{u})
=
(B_1u_2u_3,\; B_2u_3u_1,\; B_3u_1u_2)^\top,
\)
where the constraint $B_1+B_2+B_3=0$ guaranties nonlinear energy conservation in the absence of damping and forcing. The remaining terms represent external forcing \(F_i(t)\),  together with additive white-noise forcing \(\sigma_i  \dot{W}_i\), for \(i=1,2,3\). Different choices of the parameters \(d_i\), \(F_i(t)\), and \(\sigma_i  \dot{W}_i\) lead to qualitatively distinct stationary statistics, ranging from approximately Gaussian equilibria to strongly non-Gaussian regimes with pronounced skewness. 

Importantly, the triad system is adopted in numerical experiments since it captures the essential linear coupling and quadratic energy-conserving interactions of turbulence in a minimal setting. Relative to high-dimensional turbulent systems, this design reduces computational costs, alleviates attractor complexity, and enables efficient statistical sampling. The resulting low-dimensional structure improves identifiability and makes symbolic discovery substantially more tractable without sacrificing the fundamental energy-transfer mechanism. In this section, we focus on the following representative regimes demonstrating distinctive statistical features:

\textbf{Regime I: Equipartition of energy.} It is designed so that the system admits a unique stable equilibrium with approximate energy equipartition among the three modes, leading to near-Gaussian statistics around this steady state. The parameters are chosen as \(d_1 = 0.2, d_2 = 0.1, d_3 = 0.1,\) and \(B_1 = 1, B_2 = -0.6, B_3 = -0.4,\) with skew-symmetric linear coefficients \(L_1 = 3, L_2 = 2, L_3 = -1\), and no deterministic forcing is applied.

\textbf{Regime II: Forward energy cascade.} Here, the parameter hierarchy concentrates stochastic energy injection in the first mode \(u_1\) while enhancing dissipation in the remaining two modes. Specifically, we choose \(d_1 = 1, d_2 =2, d_3 =2\), and \(\sigma_1^2 = 10, \sigma_2^2 = 0.01, \sigma_3^2 = 0.01.\) Nonlinear coupling is taken as \(B_1 = 2, B_2 = B_3 =-1.\) Skew-symmetric linear interactions are removed by setting \(L_1 = L_2 = L_3= 0,\), and no deterministic forcing is applied.

\textbf{Regime III: Dual energy cascade.} The stochastic injection hierarchy is reversed by setting
\(
\sigma_1^2 = 0.01, \sigma_2^2 = \sigma_3^2 = 0.1,
\)
while keeping the same damping coefficients
and the nonlinear coupling coefficients as in the forward cascade case. Skew-symmetric linear interactions are introduced,
\(
L_1 = 0.09, ~L_2 = 0.06, ~L_3 = -0.03,
\)
to enhance inter-mode energy exchange. And constant deterministic forcing \(
F_1(t) = 0, F_2(t) = -1, F_3(t) = 1,\)
is applied here. For this configuration, quadratic and skew-symmetric interactions redistribute energy across all three modes, producing bidirectional transfer between 
\(u_1\) ,\(u_2\) and \(u_3\), and resulting in strongly skewed non-Gaussian stationary statistics..

\textbf{Regime IV: Energy cascade with periodic forcing.}
The same damping, noise, and nonlinear coupling parameters as in the forward energy-cascade regime are adopted, with vanishing skew-symmetric interactions.
The distinction in this regime is the introduction of coherent periodic forcing applied uniformly to all three modes \(F_1(t) = F_2(t)= F_3(t) = \sin(\frac{\pi}{4} t).\)
This periodic forcing generates time-dependent energy injection, which is redistributed through quadratic interactions, leading to a periodically modulated cascade and enabling the study of non-stationary energy transfer and phase-dependent flux dynamics.

\textbf{Regime V: Energy cascade with random forcing.} The same damping, nonlinear coupling, and skew-symmetric coefficients remain as in the periodic cascade regime, but replace the deterministic periodic forcing with a stochastic process applied uniformly to all three modes:\(
F_1(t) = F_2(t) = F_3(t) = m(t),
\)
where \(m(t)\) is an Ornstein–Uhlenbeck process satisfying
\(
dm = -5 m \, dt + 0.2 \, dW(t),
\) and injects energy into all modes in a time-varying manner. Unlike the periodic regime, both the direction and intensity of energy transfer fluctuate irregularly in time, capturing cascade dynamics under unsteady stochastic driving.

For brevity in the following discussions, Regimes I–V are referred to as Equipartition, Forward Cascade, Dual Cascade, Periodic Cascade, and Random Cascade, respectively.

\subsection{Implementation and Hyperparameter Settings}
Below, we summarize the implementation details of the model setup and the experimental hyperparameters used in all regimes.

\vspace{2mm}

\textbf{Sample Generation.} 
For each regime, trajectories of the stochastic triad model \eqref{Triad} are generated using the Euler–Maruyama discretization scheme. The initial condition is sampled from a multivariate normal distribution with mean 
\((-1,\; 0.5,\; -0.5)\) and variances \((0.5^2,\; 0.2^2,\; 0.1^2)\). A total of $M = 1{,}000$ training trajectories are simulated. Each trajectory consists of $N = 1{,}001$ time points obtained via uniform time discretization with step size $h = 0.01$ over a time horizon $T = 10.0$. Following Section~2, the trajectory data are reorganized into $M\times N$ scalar input–output pairs for each mode,
\(
\big(u_i^{(m)}(t_n),\, u_i^{(m)}(t_{n+1})\big) \in \mathbb{R}^2,
\quad i=1,2,3,
\)
where $m=1,\dots,M$ and $n=0,\dots,N-1$. These samples are used to train the framework. For evaluation, we generate an additional \(500{,}000\) test trajectories from the same regime and initial distribution, using a step size \(h = 0.01\). While this framework actually allows for arbitrary time horizons (such as \(T = 20, 50, 100\) or beyond), we fix \(T_{\text{test}} = 20.0\) in this experiment to evaluate the long-term statistical behavior.

\vspace{2mm}

\textbf{Setting of FEX.}
For all regimes, the binary tree architecture and the associated unary and binary operator sets follow the configuration described in Section~\ref{FEX_search}. We therefore only summarize the additional implementation details and hyperparameter settings of the four main components of FEX: \textcircled{\small{1}} \textbf{Score Computation.} The score function is first optimized using Adam with a learning rate of $8.0\times10^{-3}$ for 20 iterations, followed by 10 iterations of L-BFGS to solve  \eqref{CO}. \textcircled{\small{2}} \textbf{Controller Architecture.} A one-layer NN with ReLU is used here to represent the controller. Its output size is $|\mathbb{B}|+3|\mathbb{U}|$, where $|\cdot|$ denotes the cardinality of a set. \textcircled{\small{3}}  \textbf{Controller Update.} The controller parameters are updated using the policy gradient method with a batch size of 200, optimized via Adam with a learning rate of $2.0\times10^{-3}$. The number of training iterations varies across regimes. For cascade-type regimes, which exhibit stronger nonlinear interactions and more complex statistical behavior, up to 100 iterations are used. For simpler regimes, fewer iterations are sufficient. \textcircled{\small{4}} \textbf{Candidate Optimization.} To prevent incorrect expressions from replacing correct candidates within the training domain, the candidate pool size is set to $N=100$. After symbolic expressions are learned independently for each mode, they are concatenated and jointly fine-tuned. The parameters are optimized using Adam with an initial learning rate of $8.0\times10^{-3}$, followed by cosine decay \cite{loshchilov2016sgdr} over 50{,}000 iterations.

\vspace{1.5mm}

\textbf{Setting of TFDM.}
For each regime, the reverse-time ODE in \eqref{ODE} is solved using the Euler method with a uniform time discretization consisting of \(K = 2000\) steps. A one-hidden-layer neural network with 50 neurons and \(\tanh\) activation is used to parameterize \(\mathcal{N}_{\theta_1}\). The network is trained for 2000 iterations using the Adam optimizer with a learning rate of \(1.0 \times 10^{-2}\) and a weight decay of \(1.0 \times 10^{-6}\) to mitigate overfitting.

\vspace{1.5mm}

\textbf{Setting of SRAN.}
For each regime, the network architecture and training procedure are identical to those used in TFDM. In addition, the regularization coefficient is set as~\( \lambda = 0.1\).

\vspace{1.5mm}

\textbf{Setting of VAE.}
For each regime, we adopt a VAE architecture with a one-hidden-layer network of 50 neurons and a latent dimension of 9 (corresponding to three latent variables per dimension in \eqref{Triad}). The training loss is defined as a weighted objective with coefficients \(\alpha_{\text{mean}} = 0.1\), \(\alpha_{\text{var}} = 1.0\), and \(\beta_{\text{KL}} = 10^{-3}\). 

\vspace{1.5mm}

\textbf{Code Availability.} All numerical results presented in this work are fully reproducible using the publicly available repository at
\url{https://github.com/xxjan719/TurbulentFEX}.

\subsection{Symbolic Expression Performance in   Stage  I}
We evaluate the performance of Stage I across different turbulent regimes using FEX and Weak SINDy~\cite{messenger2021weak}, as summarized in Table~\ref{tab:multi_regime_expression}. FEX accurately recovers the governing equations across all regimes, including nonlinear interactions and forcing expressions with energy-conserving coefficients.  Under strong stochastic fluctuations, it continues to identify the correct dynamics, and in the random cascade regime, it further captures the deterministic component of \(\mathbf{F}(t)\), demonstrating robustness to regime-dependent forcing. The coefficient estimation error of \(\mathbf{B}(\mathbf{u},\mathbf{u})\) is further provided in Section~SM3.1 of the Supplementary Material.
In contrast, Weak SINDy typically approximates the forcing term polynomially rather than learning its explicit form, and inaccurate coefficient estimation often leads to unstable predictions. Its performance deteriorates further under strong stochastic influence, where it fails to identify interaction terms correctly. Classical SINDy exhibits similar but more pronounced limitations and is therefore omitted for brevity.

\begin{table*}[!htbp]
\caption{Recovered deterministic vector fields across turbulence regimes by FEX and Weak SINDy. Coefficients are rounded to three decimals and negligible terms are omitted for readability.}
\label{tab:multi_regime_expression}
\centering
\scriptsize
\setlength{\tabcolsep}{2.5pt}
\renewcommand{\arraystretch}{1.1}
\resizebox{\textwidth}{!}{%
\begin{tabular}{c|c|p{6cm}|p{6cm}|p{3.8cm}}
\toprule
\textbf{Regime} & \textbf{Dim} & \textbf{FEX Expression} & \textbf{Weak SINDy Expression} & \textbf{Ground Truth Expression} \\
\midrule

\multirow{3}{*}{\textbf{Equipartition}}
& 1 & $1.000u_2u_3-0.205u_1+0.997u_2-2.001u_3$
& $0.122u_2u_3- 0.869u_1+ 0.849u_2- 1.811u_3+0.518$
& $u_2u_3-0.2u_1+u_2-2u_3$ \\

& 2 & $-0.600u_1u_3-0.990u_1-0.105u_2-2.988u_3+0.014$
& $0.079u_1u_3- 1.448u_1- 0.381u_2-- 3.539u_3+0.0766$
& $-0.6u_1u_3-u_1-0.1u_2-3u_3$ \\

& 3 & $-0.402u_1u_2+1.997u_1+2.999u_2-0.104u_3-0.017$
& $ 0.169u_1u_2+ 0.027u_1+ 3.604u_2+0.614u_3  +1.023$
& $-0.4u_1u_2+2u_1+3u_2-0.1u_3$ \\

\midrule

\multirow{3}{*}{\textbf{Forward Cascade}}
& 1 & $2.035u_2u_3-0.986u_1+0.171u_2-0.149u_3-0.026$
& $8.555u_2u_3-0.833u_1-2.476u_3$
& $2u_2u_3-u_1$ \\

& 2 & $-1.001u_1u_3-2.002u_2$
& $-15.310u_1u_3-0.157u_1+0.418u_3$
& $-u_1u_3-2u_2$ \\

& 3 & $-0.999u_1u_2-1.994u_3$
& $7.365u_1u_2+0.176u_1-0.541u_2-0.910u_3$
& $-u_1u_2-2u_3$ \\

\midrule

\multirow{3}{*}{\textbf{Dual Cascade}}
& 1 & $1.985u_2u_3-0.995u_1+0.059u_2+0.039u_3+0.040$
& \(488.678u_2u_3- 2.087u_1+ 116.695u_2 + 117.007u_3 -4.750  \)
& $2u_2u_3-u_1+0.03u_2+0.06u_3$ \\

& 2 & $-1.000u_1u_3-0.030u_1-1.999u_2-0.088u_3-1.000$
& \(- 11.761u_1u_3- 2.315u_1+ 1.950u_2+ 3.021u_3-1.569    \)
& $-u_1u_3-0.03u_1-2u_2-0.09u_3-1$ \\

& 3 & $-1.000u_1u_2-0.060u_1+0.090u_2-1.999u_3+0.999$
& \(38.427u_1u_2+ 5.144u_1- 17.765u_2- 22.610u_3+3.844   \)
& $-u_1u_2-0.06u_1+0.09u_2-2u_3+1$ \\

\midrule

\multirow{3}{*}{\textbf{Periodic Cascade}}
& 1 & $1.974u_2u_3-0.994u_1+0.119u_2-0.098u_3+\sin(0.785t-0.091)-0.010$
& \(0.564u_2u_3- 0.247u_1 - 0.097u_2 - 0.117u_3 + 0.300t - 0.440t^2 + 0.091t^3 - 0.006t^4\)
& $2u_2u_3-u_1+\sin(\frac{\pi}{4}t)$ \\

& 2 & $-0.999u_1u_3-1.998u_2+\sin(0.785t+0.003)$
& \(- 0.623u_1u_3+ 0.6187u_1- 0.790u_2- 0.150u_3 + 0.741t - 0.427t^2 + 0.061t^3 - 0.002t^4 \)
& $-u_1u_3-2u_2+\sin(\frac{\pi}{4}t)$ \\

& 3 & $-1.000u_1u_2-1.998u_3+\sin(0.785t+0.001)$
& \(- 1.031u_1u_2+ 0.637u_1+ 0.092u_2- 0.436u_3 - 0.504t + 0.012t^3 \)
& $-u_1u_2-2u_3+\sin(\frac{\pi}{4}t)$ \\

\midrule

\multirow{3}{*}{\textbf{Random Cascade}}
& 1 & $2.089u_2u_3-1.019u_1+0.088u_2-0.015u_3+1.177\text{e}^{-4.947t}$
& \(4.464u_2u_3- 4.955u_1+ 15.238u_2 - 9.549u_3 + 3.850t - 1.306t^2 + 0.215t^3 - 0.017t^4\)
& $2u_2u_3-u_1+m(t)$ \\

& 2 & $-0.964u_1u_3-1.978u_2+0.064u_3+1.392\text{e}^{-4.998t}$
& \( 9.367u_1u_3- 4.520u_1 + 11.459u_2 - 6.821u_3  + 3.728t - 1.227t^2 + 0.196t^3 - 0.015t^4 \)
& $-u_1u_3-2u_2+m(t)$ \\

& 3 & $-0.975u_1u_2+0.044u_2-1.952u_3+1.438\text{e}^{-5.165t}$
& \(20.094u_1u_2- 4.626u_1+ 11.864u_2- 7.084u_3+3.663t -1.201t^2+0.191t^3 - 0.014t^4 \)
& $-u_1u_2-2u_3+m(t)$ \\

\bottomrule
\end{tabular}%
}
\end{table*}

\subsection{Tolerance of Stage I to Different Noise Levels}

To study the sensitivity of FEX to stochastic perturbations, we introduce a scaling parameter $c \in [0,2]$ for the white-noise forcing, replacing $\sigma_i\dot{W}_i$ with $c\sigma_i\dot{W}_i$ for $i=1,2,3$ allowing us to control the noise intensity from deterministic to strongly stochastic regimes. We focus on the equipartition, forward cascade, and dual cascade regimes as representative examples, where the same operator sequence applies across regimes and only the coefficients differ, and we assess the estimation error of the recovered interaction coefficients $\hat{B}_i$.
As shown in Figure~\ref{fig:discussion_quality_1}, the errors on a semi-log scale remain within
$10^{-6}$ to $10^{-2}$ across noise levels. This indicates that FEX maintains accurate coefficient recovery under stochastic perturbations while preserving the underlying energy conservation.

\begin{figure}[!htbp]
\begin{center}
    \includegraphics[width=1\linewidth]{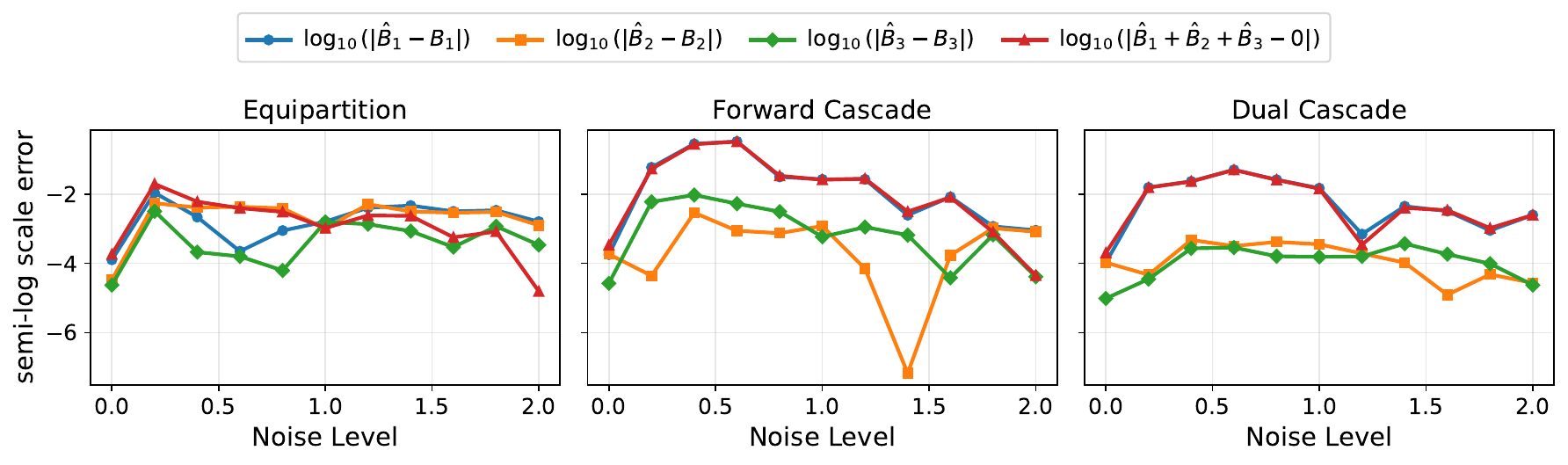}
    \caption{Coefficient performance of the nonlinear interaction terms across three different regimes under varying noise levels, evaluated using the same operator sequence within each regime. Here, \(B_i\) denotes the coefficient of the nonlinear term, and \(\hat{B}_i\) denotes its predicted value.}
\label{fig:discussion_quality_1}
\end{center}
\end{figure}

\subsection{Coefficient Estimation Improvement in  Stage I}

We now examine how the accuracy of coefficient estimation  by FEX improves with augmented data. Using the same three regimes and a fixed operator sequence in Section 6.3, we vary the number of training samples from \(1{,}000\) to \(10{,}000\) under standard white-noise forcing. The results in Figure~\ref{fig:discussion_quality_2} show that the estimation error of the nonlinear interaction coefficients decreases as the sample size increases, with the most significant improvement occurring between \(1{,}000\) and \(4{,}000\), particularly in the forward and dual cascade regimes. Beyond this range, further error reduction becomes limited, likely due to stochastic variability in the data. In contrast, the Equipartition regime exhibits weaker improvement, indicating greater sensitivity to noise. Overall, increasing the sample size improves coefficient estimation up to a certain threshold, after which the benefit diminishes.

\begin{figure}[!htbp]
\begin{center}
    \includegraphics[width=1\linewidth]{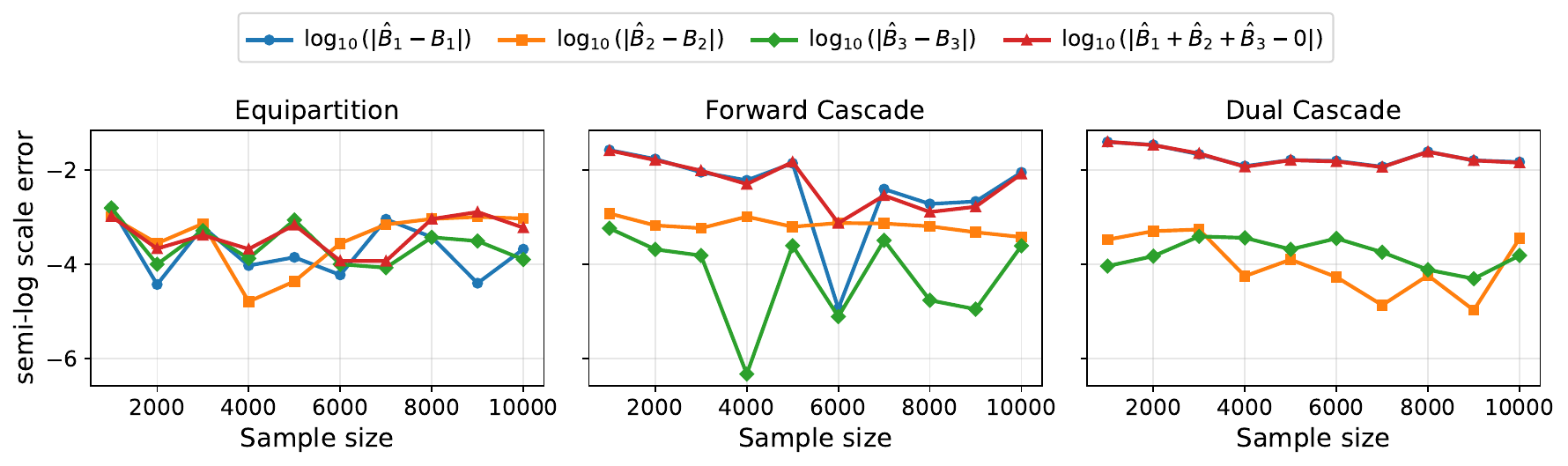}
    \caption{Coefficient performance of the nonlinear interaction terms across three different regimes under different training samples, evaluated using the same operator sequence within each regime. In dual cascade regime, red curve  overlaps with the  blue curve performance. }
\label{fig:discussion_quality_2}
\end{center}
\end{figure}

\subsection{Different moments Comparison}

We evaluate the two-stage framework by computing statistical moments up to order 5, which provide a sensitive measure of higher-order behavior. The results are shown in Figure~\ref{fig:discussion_moment_less_equal_3}--\ref{fig:discussion_moment_equal_45}, with additional density functions and moment results for different regimes provided in Sections~SM3.2--SM3.4 of the Supplementary Material.
Overall, all three methods capture the qualitative behavior across all regimes. SRAN achieves the closest agreement with the ground truth, particularly in the equipartition and dual cascade regimes, while TFDM shows only minor deviations. VAE preserves overall trends but exhibits larger amplitude fluctuations at higher orders. In the random cascade regime, performance becomes increasingly sensitive to the strength of the time-dependent forcing in \(\mathbf{F}(t)\), where strong stochastic effects make accurate recovery more challenging. These results demonstrate that the proposed framework remains effective under complex forcing while also highlighting its limitations under strong stochastic influence. 

\begin{figure}[!htbp]
\begin{center}
    \includegraphics[width=1.\linewidth]{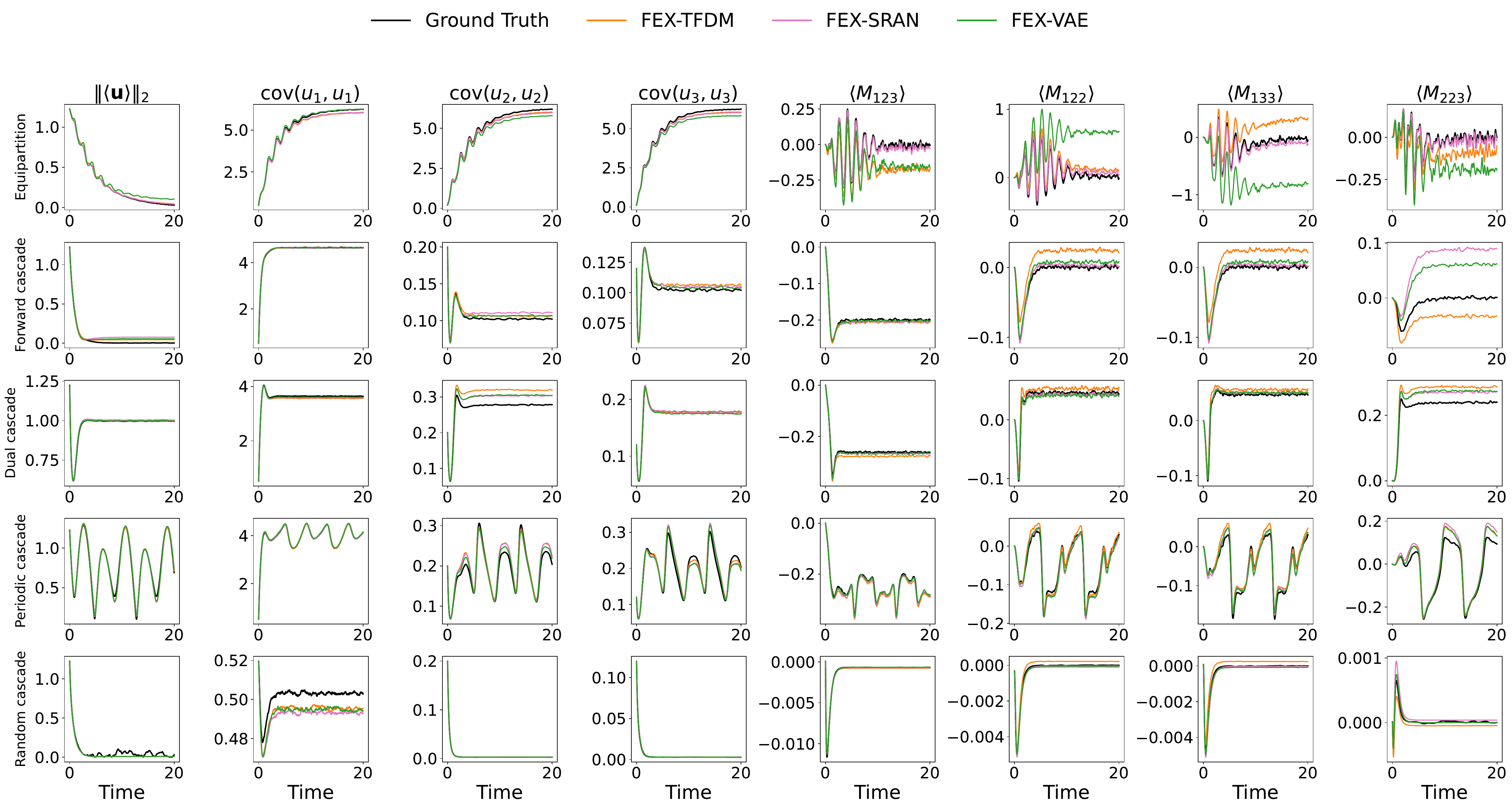}
\caption{Comparison of predicted moments up to third order across all regimes. 
\(\|\langle \mathbf{u} \rangle\|_2\) denotes the Euclidean norm of the mean state of \(\mathbf{u}\), and \(M_{ijk} = \mathbb{E}[u_i u_j u_k]\) denotes third-order mixed moments; for example, \(M_{122} = \mathbb{E}[u_1 u_2^2]\), \(M_{133} = \mathbb{E}[u_1 u_3^2]\), \(M_{233} = \mathbb{E}[u_2 u_3^2]\), and \(M_{123} = \mathbb{E}[u_1 u_2 u_3]\).
The orange curve (FEX-TFDM) corresponds to the two-stage framework using FEX combined with TFDM. 
The pink curve (FEX-SRAN) represents FEX combined with SRAN, while the green curve (FEX-VAE) denotes FEX combined with VAE-based activation.}
\label{fig:discussion_moment_less_equal_3}
\end{center}
\end{figure}

\begin{figure}[!htbp]
\begin{center}
    \includegraphics[width=1.\linewidth]{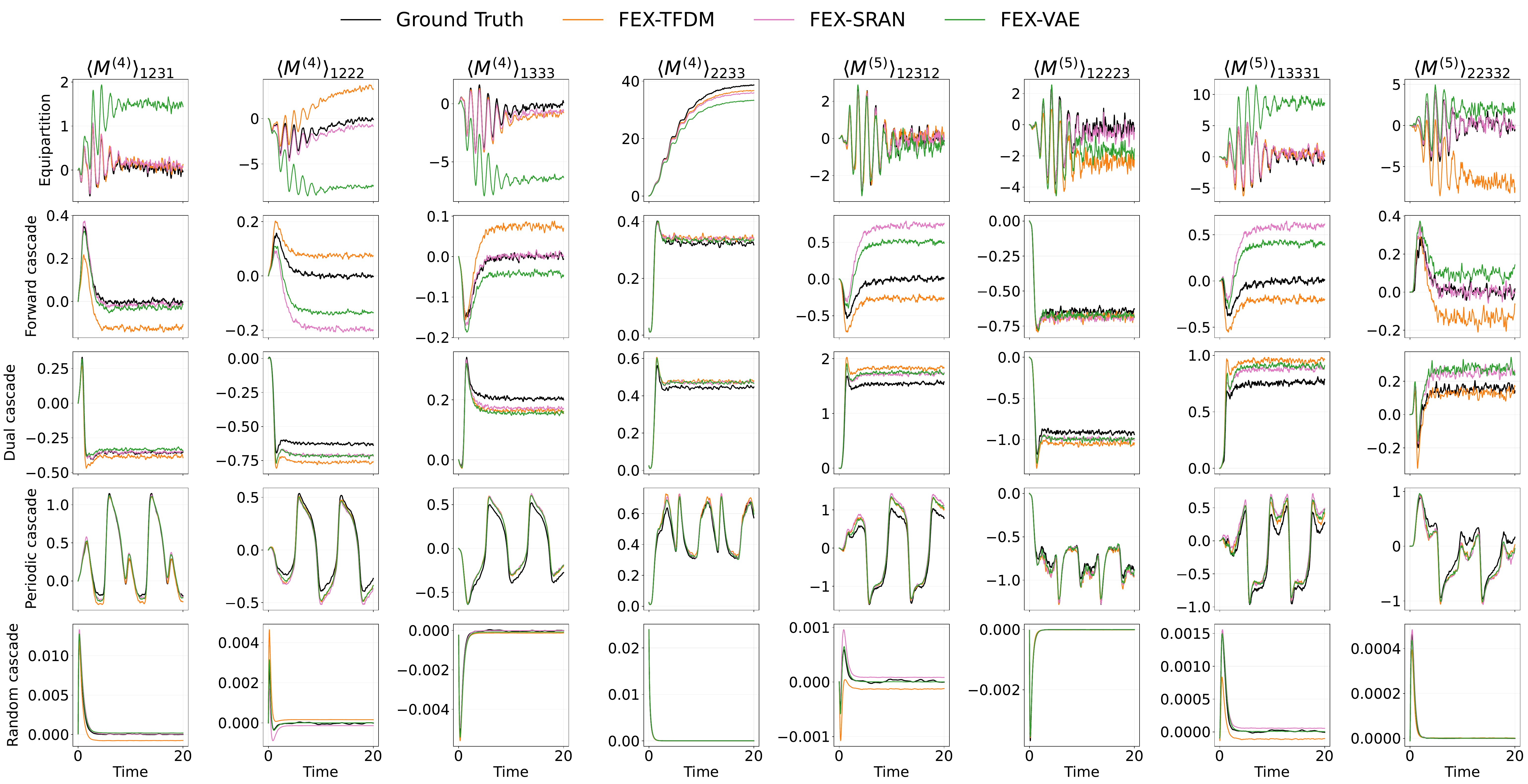}
\caption{Comparison of predicted fourth- and fifth-order moments across all regimes. 
 $M^{(k)}$ denotes moments of order $k$. For fourth-order moments, $M^{(4)}_{ijkl} = \mathbb{E}[u_i u_j u_k u_l]$, and for fifth-order moments, $M^{(5)}_{ijklm} = \mathbb{E}[u_i u_j u_k u_l u_m]$, where the indices specify the corresponding components.}
\label{fig:discussion_moment_equal_45}
\end{center}
\end{figure}



\subsection{Further Comparison for TFDM: Time-Independent vs. Time-Dependent Modeling}
For \eqref{Triad} driven by constant white-noise forcing, two training strategies can be adopted: the time-independent approach assumes a fixed stochastic distribution across all time steps, while the time-dependent approach allows the distribution to be updated at each time step. To investigate the differences between these strategies, we use TFDM as a representative example and evaluate the covariance structures of the state variables in regimes without forcing influence; the corresponding pseudo-algorithm is provided in Algorithm~SM2.3 in the Supplementary Material.
As shown in Figure~\ref{fig:discussion_quality_5}, the time-dependent approach more accurately captures short-term dynamics, particularly the oscillatory behavior in the covariance, with clear advantages in the cascade regime. However, beyond \(T_{\text{test}} = 10\), it becomes unstable due to ill-conditioned  and predictions deteriorate. The time-independent approach produces smoother and more stable long-term trajectories at the cost of a systematic bias and loss of fine-scale variations. These results suggest that the choice of modeling strategy involves a trade-off between short-term accuracy and long-term stability, which may be guided by the target time horizon in practice.

\begin{figure}[!htbp]
\begin{center}
    \includegraphics[width=0.7\linewidth]{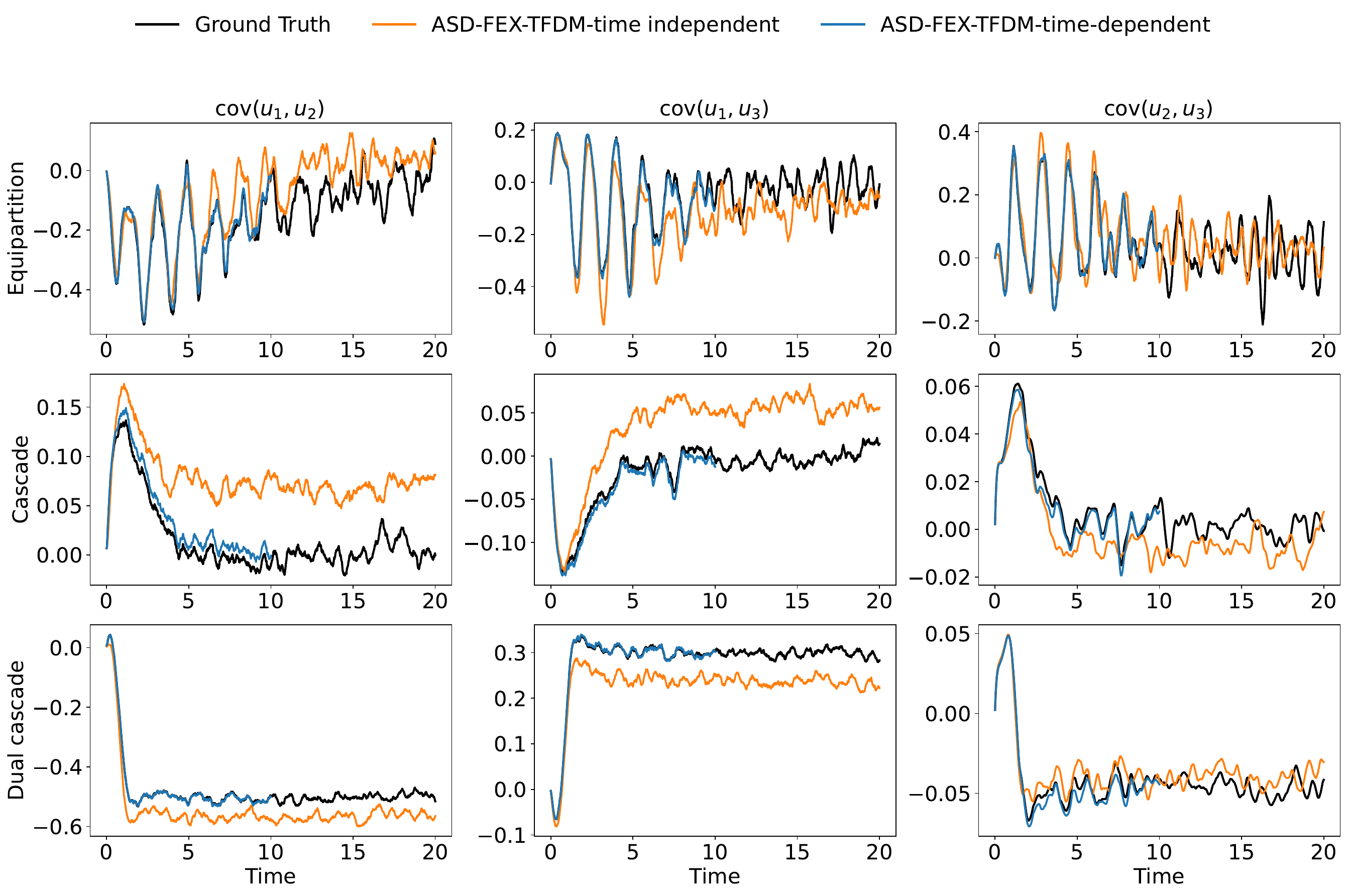}
    \caption{Comparison of predicted covariance structures across three regimes: equipartition, forward cascade, and dual cascade. Orange (FEX-TFDM, time-independent) and blue (FEX-TFDM, time-dependent) denote FEX-based symbolic models coupled with time-independent and time-dependent TFDM activation, respectively.}
\label{fig:discussion_quality_5}
\end{center}
\end{figure}

\section{Conclusion}\label{sec:sum}
In this paper, we extend symbolic discovery to identify turbulent dynamical systems and recover higher-order statistical behavior through a two-stage framework. In Stage I, FEX serves as the core symbolic learner, identifying closed-form representations of the underlying dynamics. It accurately recovers nonlinear interaction terms and preserves intrinsic structures, such as conservation laws, while remaining insensitive to stochastic perturbations. In Stage II, generative models, including TFDM, SRAN, and VAE, are used to model and resample the stochastic components, enabling predictive dynamics through enriched sample generation. This stage further facilitates accurate recovery of higher-order statistical quantities, including moments, without being restricted to a fixed prediction horizon. 
Numerical experiments on the triad system validate these properties, demonstrating that the proposed framework reliably captures turbulent dynamics across multiple regimes. Furthermore, it remains effective in more challenging settings with multiple sources of stochasticity, such as the random cascade regime, highlighting its applicability to complex stochastic systems.

In future work, we aim to extend this framework in several directions. First, we plan to extend the framework to model multiscale coupled stochastic effects, including white-noise-driven processes in both space and time, with the goal of accurately predicting long-term high-order moment behavior while maintaining the ability to recover interpretable nonlinear interactions and forcing terms without prior assumptions, as well as to stochastic PDE settings where identifying high-order moment dynamics remains a significant challenge. Second, the computational cost of TFDM scales with the number of time steps and data volume, and developing more scalable implementations through adaptive step size selection or more efficient sampling strategies is an important practical issue. Finally, a more systematic component-wise symbolic discovery strategy that further exploits the interaction structure of high-dimensional systems will also be explored.

\bibliographystyle{plain}
\bibliography{main/main}

\end{document}


\maketitle
\section{Proof Details}
\subsection{First Stage for FEX}
\subsubsection{Proof for Theorem 5.2}

\begin{proof}
Stacking all $MN$ samples gives
\[
Y^i = \Gamma^i \alpha^i + r^i + \xi^i,
\]
where $Y^i \in \mathbb{R}^{MN}$ is the response vector with entries $Y^i_{(m,n)} = \Delta u_i^{(m)}(t_n)/h$, $\Gamma^i \in \mathbb{R}^{MN \times K}$ is the design matrix with rows $(\Psi^i_{m,n})^\top$, and $r^i, \xi^i \in \mathbb{R}^{MN}$ are the stacked approximation residual and stochastic regression error vectors, respectively. The least-squares estimator satisfies the normal equation, so
\[
\hat{\alpha}^i - \alpha^i
=
\left(\frac{1}{MN}(\Gamma^i)^\top \Gamma^i\right)^{-1}
\left(\frac{1}{MN}(\Gamma^i)^\top (r^i + \xi^i)\right).
\]
By Assumptions~5.1 and~5.3, ergodicity and linear independence give
\[
\frac{1}{MN}(\Gamma^i)^\top \Gamma^i \xrightarrow{P} G_i, 
\]
where $G_i$ is positive definite. By continuity of matrix inversion at nonsingular matrices,
\[
\left(\frac{1}{MN}(\Gamma^i)^\top \Gamma^i\right)^{-1} = G_i^{-1} + o(1),
\]
which yields the representation
\[
\hat{\alpha}^i - \alpha^i
=
\left(G_i^{-1} + o(1)\right)
\left(\frac{1}{MN}(\Gamma^i)^\top (r^i + \xi^i)\right).
\]

By Assumption~5.4, the polynomial basis can be chosen so that
\[
|r_i(t_n)| \leq \sup_{t \in [0,T]} |r_i(t)| \leq \varepsilon_i, \qquad \text{for all } n = 0, \dots, N-1.
\]
Since the basis functions are square-integrable under $\rho$, there exists a constant $C_\psi < \infty$ such that $\|\Psi^i_{m,n}\| \leq C_\psi$ for all $m, n$. Since $r^i$ is deterministic once the basis is fixed,
\[
\left\|\mathbb{E}\left[\frac{1}{MN}(\Gamma^i)^\top r^i\right]\right\|
=
\left\|\frac{1}{MN}(\Gamma^i)^\top r^i\right\|
\leq
\frac{1}{MN} \sum_{m=1}^{M} \sum_{n=0}^{N-1} \|\Psi^i_{m,n}\| \cdot |r_i(t_n)|
\leq
C_\psi \varepsilon_i.
\]

From \eqref{FM1} and (5.3), the normalized regression error satisfies
\[
\xi_i^{(m)}(t_n)
=
\frac{\Delta u_i^{(m)}(t_n)}{h} - C_i(\mathbf{u}^{(m)}(t_n), t_n)
=
\frac{S_i(t_n, z; h)}{h}.
\]
By Assumption~(5.2), the stochastic increment admits the decomposition
\[
\frac{S_i(t_n, z; h)}{h}
=
\frac{1}{\sqrt{h}} \sum_{k=1}^{d} \sigma_{ik}(\mathbf{u}(t_n), t_n) z_k
+
\frac{R_i(t_n, h)}{h}.
\]
Taking the conditional expectation given $\mathbf{u}(t_n)$,
\[
\mathbb{E}\!\left[\xi_i^{(m)}(t_n) \mid \mathbf{u}(t_n)\right]
=
\frac{1}{\sqrt{h}} \sum_{k=1}^{d} \sigma_{ik}(\mathbf{u}(t_n), t_n) \mathbb{E}[z_k \mid \mathbf{u}(t_n)]
+
\frac{1}{h}\mathbb{E}[R_i(t_n, h) \mid \mathbf{u}(t_n)]
=
0,
\]
where the first term vanishes because $z_k \sim \mathcal{N}(0,1)$ is independent of $\mathbf{u}(t_n)$, and the second term vanishes exactly because $R_i$ is an It\^{o} integral. By the tower property,
\[
\mathbb{E}\!\left[\frac{1}{MN}(\Gamma^i)^\top \xi^i\right]
=
\frac{1}{MN} \sum_{m=1}^{M} \sum_{n=0}^{N-1}
\mathbb{E}\!\left[\Psi^i_{m,n} \cdot \mathbb{E}\!\left[\xi_i^{(m)}(t_n) \mid \mathbf{u}^{(m)}(t_n)\right]\right]
=
0.
\]

Combining above arguments,
\[
\left\|\mathbb{E}\!\left[\frac{1}{MN}(\Gamma^i)^\top (r^i + \xi^i)\right]\right\|
\leq
C_\psi \varepsilon_i.
\]
Using the representation from Step 1 and boundedness of $G_i^{-1} + o(1)$,
\[
\left\|\mathbb{E}[\hat{\alpha}^i - \alpha^i]\right\|
\leq
C_i \left(\varepsilon_i + o(1)\right),
\]
where $C_i > 0$ depends only on $G_i$ and the basis.

For the variance bound,
since $r^i$ is deterministic, the variance comes entirely from $\xi^i$:
\[
\mathrm{Var}(\hat{\alpha}^i)
\leq
\|G_i^{-1} + o(1)\|^2 \cdot \mathbb{E}\!\left[\left\|\frac{1}{MN}(\Gamma^i)^\top \xi^i\right\|^2\right].
\]
By the Cauchy-Schwarz inequality,
\[
\left\|\frac{1}{MN} \sum_{m,n} \Psi^i_{m,n} \xi_i^{(m)}(t_n)\right\|^2
\leq
\frac{1}{MN} \sum_{m=1}^{M} \sum_{n=0}^{N-1} \|\Psi^i_{m,n}\|^2 \cdot |\xi_i^{(m)}(t_n)|^2.
\]
Taking expectations and using the conditional second moment,
\[
\mathbb{E}\!\left[|\xi_i^{(m)}(t_n)|^2 \mid \mathbf{u}(t_n)\right]
=
\mathbb{E}\!\left[\left|\frac{S_i(t_n,z;h)}{h}\right|^2 \mid \mathbf{u}(t_n)\right].
\]
Expanding using the decomposition in Assumption~5.2,
\[
\mathbb{E}\!\left[\left|\frac{S_i(t_n,z;h)}{h}\right|^2 \mid \mathbf{u}(t_n)\right]
=
\frac{1}{h^2}\left[
h \sum_{k=1}^{d} \sigma_{ik}^2(\mathbf{u}(t_n),t_n)
+
\mathbb{E}\!\left[|R_i(t_n,h)|^2 \mid \mathbf{u}(t_n)\right]
\right]\]
\[=
\frac{1}{h}\sum_{k=1}^{d}\sigma_{ik}^2 + O(1)
=
O\!\left(\frac{1}{h}\right).
\]
Together with $\|\Psi^i_{m,n}\|^2 \leq C_\psi^2$ and ergodicity,
\[
\mathbb{E}\!\left[\left\|\frac{1}{MN}(\Gamma^i)^\top \xi^i\right\|^2\right]
\leq
\frac{1}{MN} \cdot C_\psi^2 \cdot O\!\left(\frac{1}{h}\right)
=
O\!\left(\frac{1}{MNh}\right)
=
O\!\left(\frac{1}{MT}\right),
\]
where we used $T = Nh$. For fixed $M$, multiplying by the bounded factor $\|G_i^{-1} + o(1)\|^2$,
\[
\mathrm{Var}(\hat{\alpha}^i)
\leq
\frac{\widetilde{C}_i}{T}(1 + o(1)),
\]
where $\widetilde{C}_i > 0$ depends only on $G_i$, the basis, and $\sigma$. This completes the proof.
\end{proof}
\subsection{Second Stage for TFDM}

\subsubsection{Proof of Lemma 5.3}
\begin{proof}
By \eqref{FM1} and \eqref{residual},
\[
\mathbf R_n
=
h\bigl(C(u(t_n),t_n)-C^*(u(t_n))\bigr)+\mathbf S(t_n,z;h).
\]
Therefore, by the triangle inequality,
\[
\|\mathbf R_n\|_2
\le
h\,\|C(u(t_n),t_n)-C^*(u(t_n))\|_2
+
\|\mathbf S(t_n,z;h)\|_2.
\]

Under Assumptions~5.1--5.5, the state process remains in a bounded region, the symbolic approximation error is bounded, and the stochastic term is bounded on the training set. Thus there exist constants \(M_C,M_S>0\), independent of \(n\), such that
\[
\|C(u(t_n),t_n)-C^*(u(t_n))\|_2\le M_C,
\qquad
\|\mathbf S(t_n,z;h)\|_2\le M_S,
\]
for all \(n=0,\dots,N-1\). It follows that
\[
\|\mathbf R_n\|_2
\le
hM_C+M_S
=: M_R,
\qquad n=0,\dots,N-1.
\]
Hence there exists a constant \(M_R<\infty\) such that
\[
\|\mathbf R_n\|_2\le M_R,\qquad n=0,\dots,N-1.
\]

Consequently, every residual sample lies in the closed \(\ell_2\)-ball of radius \(M_R\), so the residual training dataset
\(
\mathbf R=[\mathbf R_0,\dots,\mathbf R_{N-1}]^\top
\)
is contained in an \(\ell_2\)-ball of radius \(M_R\). See \cite{wang2026error,wanner2024higher} for related proofs. 
\end{proof}

\subsubsection{Proof of Theorem 5.4}
\begin{proof}
We decompose the total error into three parts:
\[
\mathbf y^{*}(\boldsymbol{\xi}_1)-\mathcal N_{\theta_1}(\boldsymbol{\xi}_1)
=
\bigl(\mathbf y^{*}(\boldsymbol{\xi}_1)-\mathbf y(\boldsymbol{\xi}_1)\bigr)
+
\bigl(\mathbf y(\boldsymbol{\xi}_1)-\mathbf y^K(\boldsymbol{\xi}_1)\bigr)
+
\bigl(\mathbf y^K(\boldsymbol{\xi}_1)-\mathcal N_{\theta_1}(\boldsymbol{\xi}_1)\bigr),
\]
where \(\mathbf y(\boldsymbol{\xi}_1)\) denotes the discretized exact solution of \eqref{ODE}, and \(\mathbf y^K(\boldsymbol{\xi}_1) = \mathbf{y}\) denotes its numerical approximation.

Applying the triangle inequality and taking expectation with respect to \(\boldsymbol{\xi}_1\), we obtain
\begin{equation}\label{eq:three_part_decomp}
\mathbb E_{\boldsymbol{\xi}_1}\!\left[
\|\mathbf y^{*}(\boldsymbol{\xi}_1)-\mathcal N_{\theta_1}(\boldsymbol{\xi}_1)\|_2
\right]
\le
I_1+I_2+I_3,
\end{equation}
where
\[
I_1=\mathbb E_{\boldsymbol{\xi}_1}\!\left[
\|\mathbf y^{*}(\boldsymbol{\xi}_1)-\mathbf y(\boldsymbol{\xi}_1)\|_2
\right], \quad I_2=\mathbb E_{\boldsymbol{\xi}_1}\!\left[
\|\mathbf y(\boldsymbol{\xi}_1)-\mathbf y^K(\boldsymbol{\xi}_1)\|_2
\right],
\]
\[I_3=\mathbb E_{\boldsymbol{\xi}_1}\!\left[
\|\mathbf y^K(\boldsymbol{\xi}_1)-\mathcal N_{\theta_1}(\boldsymbol{\xi}_1)\|_2
\right].\]

We bound each term separately.

For \(I_1\), Assumption~5.5 implies that the initialization corresponds to a Gaussian smoothing with covariance \(\Sigma=C_0^2\mathbf I_d\). Therefore,
\[
I_1
\le
\sqrt{\operatorname{tr}(\Sigma)}
=
C_0\sqrt d.
\]

For \(I_2\), the global discretization error bound in Theorem~3.8 of \cite{wang2026error} yields
\[
I_2
=
\mathbb E_{\boldsymbol{\xi}_1}\!\left[
\|\mathbf y(\boldsymbol{\xi}_1)-\mathbf y^K(\boldsymbol{\xi}_1)\|_2
\right]
\le
C_1dh,
\]
for some constant \(C_1>0\).

For \(I_3\), by the triangle inequality and the assumption on \(\mathcal N_{\theta_1}\),
\[
I_3
\le
\mathbb E_{\boldsymbol{\xi}_1}\!\left[\|\mathbf y^K(\boldsymbol{\xi}_1)\|_2\right]
+
\mathbb E_{\boldsymbol{\xi}_1}\!\left[\|\mathcal N_{\theta_1}(\boldsymbol{\xi}_1)\|_2\right]
\le
\widetilde C + C_2
=: C_3,
\]
where \(\widetilde C>0\) follows from the boundedness of the exact solution together with the stability of the numerical scheme.

Substituting these three estimates into \eqref{eq:three_part_decomp}, we conclude that
\[
\mathbb E_{\boldsymbol{\xi}_1}\!\left[
\|\mathbf y^{*}(\boldsymbol{\xi}_1)-\mathcal N_{\theta_1}(\boldsymbol{\xi}_1)\|_2
\right]
\le
C_0\sqrt d + C_1dh + {C_3}.
\]
This completes the proof.
\end{proof}




\section{Pseudo Algorithms}

Following the implementations described in the main paper, we present the workflow in the following algorithms. 
Algorithm~\ref{alg:training_ASD_FEX_SRAN} and Algorithm~\ref{alg:training_ASD_FEX_VAE} correspond to the use of SRAN and VAE models as activation methods, respectively. 
Algorithm~\ref{alg:training_ASD_FEX_TFDM_dependent} provides the detailed implementation of the time-dependent TFDM described in Section~5.6.

\begin{algorithm}[!htbp]
\caption{Workflow of two-stage framework by FEX and SRAN}
\label{alg:training_ASD_FEX_SRAN}

\textbf{Input:} Observed state trajectories 
$\{\mathbf{u}^{(m)}(t_n)\}_{m,n=1}^{M,N}$,
where $\mathbf{u}^{(m)}(t_n)\in\mathbb{R}^d$.\\
\textbf{Output:} Estimated symbolic model $C^{*}$; estimated network model $\mathcal{N}_{\theta_2^{*}}(\cdot)$ with the optimal parameter \(\theta_2^{*}\).

\begin{algorithmic}[1]
\State \textbf{Procedure:}

\For{$i = 1, \dots, d$}
    \State Minimize $\mathcal{L}(\Theta_i)$ via~\eqref{CO} following Section~\ref{FEX_search}.
\EndFor

\State Construct $\mathbf{C}^{*}$; then compute the residual \(\mathbf{R}\) via~(2.2).

\State Set $\boldsymbol{\xi}_2 \sim \mathcal{N}(0,\mathbf{I}_d)$ as input data and construct the training pair $(\boldsymbol{\xi}_2,\mathbf{R})$.
    
\State Train the neural network $\mathcal{N}_{\theta_2}$ using the training pair $(\boldsymbol{\xi}_2,\mathbf{R})$ by optimizing (2.8).

\end{algorithmic}
\end{algorithm}

\begin{algorithm}[!htbp]
\caption{Workflow of two-stage framework by FEX with VAE}
\label{alg:training_ASD_FEX_VAE}

\textbf{Input:} Observed state trajectories 
$\{\{\mathbf{u}^{(m)}(t_n)\}_{m,n=1}^{M,N}$,
where $\mathbf{u}^{(m)}(t_n)\in\mathbb{R}^d$; hyperparameters 
\(\alpha_{\text{mean}}, \alpha_{\text{var}}, \beta_{\text{KL}}\).\\
\textbf{Output:} Estimated symbolic model $\mathbf{C}^{*}$; trained decoder $\mathcal{N}_{\theta_3^{*}}(\cdot)$ with optimal parameters \(\theta_3^{*}\).

\begin{algorithmic}[1]
\State \textbf{Procedure:}

\For{$i = 1, \dots, d$}
    \State Minimize $\mathcal{L}(\Theta_i)$ via~\eqref{CO} as described in Section~\ref{FEX_search}.
\EndFor

\State Construct the symbolic model $\mathbf{C}^{*}$ and compute the residual $\mathbf{R}$ via~\eqref{final_expression}.

\State Define the encoder $\mathcal{N}_{\phi_3}(\mathbf{z}\mid \mathbf{R})$ and decoder $\mathcal{N}_{\theta_3}(\mathbf{R}\mid \mathbf{z})$, where the encoder produces $\mu_{\phi_3}(\mathbf{R})$ and $\sigma_{\phi_3}(\mathbf{R})$, and sample
\[
\mathbf{z} = \mu_{\phi_3}(\mathbf{R}) + \sigma_{\phi_3}(\mathbf{R}) \odot \boldsymbol{\epsilon}, 
\quad \boldsymbol{\epsilon} \sim \mathcal{N}(0,\mathbf{I}_d).
\]

\State Train $(\phi_3,\theta_3)$ by minimizing (2.9) with given hyperparameters \(\alpha_{\text{mean}}, \alpha_{\text{var}}, \beta_{\text{KL}}\).

\end{algorithmic}
\end{algorithm}

\begin{algorithm}[!htbp]
\caption{Workflow of two-stage framework by FEX and time-dependent TFDM}
\label{alg:training_ASD_FEX_TFDM_dependent}

\textbf{Input:} Observed state trajectories 
$\{\mathbf{u}^{(m)}(t_n)\}_{m,n=1}^{M,N}$,
where $\mathbf{u}^{(m)}(t_n)\in\mathbb{R}^d$; index $K$.\\
\textbf{Output:} Estimated symbolic model $C^{*}$; estimated stochastic model $\mathcal{N}_{\theta_3^{*}}^{\text{all}}$ with the optimal parameters \(\theta_3^{*}\).

\begin{algorithmic}[1]
\State \textbf{Procedure:}

\For{$i = 1, \dots, d$}
    \State Minimize $\mathcal{L}(\Theta_i)$ via~\eqref{CO} following Section~\ref{FEX_search}.
\EndFor

\State Assemble the vector operator $\mathbf{C}^{*}$ and jointly fine-tune its coefficients using $\{\mathbf{u}^{(m)}(t_n)\}_{m,n=1}^{M,N}$.

\For{$n = 1,\dots,N$}
    \State Compute the residual $\mathbf{R}_n$ according to~(2.2).
    
    \State Perform FAISS-based nearest-neighbor selection to obtain a representative sample $Z_0$ from $\mathbf{R}_n$.
    
    \State Sample $Z_1 = z_n \sim \mathcal{N}(0,\mathbf{I}_d)$ and set $Z_{\tau_K} = Z_1$.
    
    \For{$k = K,\dots,1$}
        \State Compute the score function $V(Z_{\tau_k},\tau_k)$ using the Monte Carlo estimator in~\cite{liu2025training}.
        \State Solve the reverse dynamics by numerically integrating~\eqref{ODE} to obtain $Z_{\tau_{k-1}}$.
    \EndFor
    
    \State Set $y_n = Z_{\tau_0}$ and construct the training pair $(z_n,y_n)$.
    
    \State Train the network model $\mathcal{N}_{\theta_3}^n$ using the dataset $(z_n,y_n)$ by solving~\eqref{optim}.
\EndFor

\State Assemble the stochastic model 
$\mathcal{N}_{\theta_3^{*}}^{\text{all}} = \{\mathcal{N}_{\theta_3^{*}}^{n}(\cdot)\}_{n=1}^{N}$.
\end{algorithmic}
\end{algorithm}














\section{Additional Experimental Results}

In this section, we present additional results for the regimes in Section~\ref{Numerical_Experiment}. In particular, we include the comparison of deterministic expressions, evaluations of first-moment performance, energy evolution, and state density from similar initial conditions, following measurements similar to those in \cite{qi2017strategies}. These results provide further evidence in the triad turbulent system that the two-stage framework, with FEX and activated generative models, effectively captures the underlying dynamics.

\subsection{Expression Performance of Interaction term}
To quantify the accuracy of the discovered nonlinear interaction terms that govern energy transfer among modes in turbulent dynamics, Table~\ref{tab:error_nonlinear_interaction} reports the absolute and relative errors of the quadratic coefficients across all regimes by FEX. These recovered coefficients exhibit very small errors, typically on the order of  \(10^{-2} - 10^{-5}\), which indicates that the FEX reliably captures the essential nonlinear structure of the turbulent dynamics.

\begin{table}[!htbp]
\centering
\caption{Absolute and relative errors of recovered nonlinear interaction coefficients across different turbulence regimes by FEX.}
\label{tab:error_nonlinear_interaction}
\footnotesize
\setlength{\tabcolsep}{4pt}
\renewcommand{\arraystretch}{1.15}
\begin{tabular}{c c c c c}
\hline
\textbf{Regime} & \textbf{Term} & \textbf{True Coef} & \textbf{Abs Error} & \textbf{Rel Error} \\
\hline

\textbf{Equipartition} & $x_2x_3$ & $1$    & $7.5178\text{e}{-5}$ & $7.5178\text{e}{-5}$ \\
         & $x_1x_3$ & $-0.6$ & $3.2725\text{e}{-4}$ & $5.4541\text{e}{-4}$ \\
         & $x_1x_2$ & $-0.4$ & $1.5529\text{e}{-3}$ & $3.8823\text{e}{-3}$ \\

\hline

\textbf{Forward Cascade} &$x_2x_3$ & $2$  & $8.3042\text{e}{-3}$ & $4.1521\text{e}{-3}$ \\
                & $x_1x_3$ & $-1$ & $3.3846\textbf{e}{-4}$ & $3.3846\textbf{e}{-4}$ \\
                & $x_1x_2$ & $-1$ & $2.4628\textbf{e}{-4}$ & $2.4628\textbf{e}{-4}$ \\

\hline

\textbf{Dual Cascade}& $x_2x_3$ & $2$  & $1.4194\text{e}{-5}$ & $7.0968\text{e}{-6}$ \\
             & $x_1x_3$ & $-1$ & $4.2251\text{e}{-3}$ & $4.2251\text{e}{-3}$ \\
             & $x_1x_2$ & $-1$ & $7.8148\text{e}{-3}$ & $7.8148\text{e}{-3}$ \\

\hline

\textbf{Periodic Cascade} & $x_2x_3$ & $2$  & $2.5928\text{e}{-2}$ & $1.2964\text{e}{-2}$ \\
                 & $x_1x_3$ & $-1$ & $6.1374\text{e}{-4}$ & $6.1374\text{e}{-4}$ \\
                 & $x_1x_2$ & $-1$ & $4.9872\text{e}{-4}$ & $4.9872\text{e}{-4}$ \\

\hline

\textbf{Random Cascade} & $x_2x_3$ & $2$  & $8.8930\text{e}{-2}$ & $4.4465\text{e}{-2}$ \\
               & $x_1x_3$ & $-1$ & $3.6133\text{e}{-2}$ & $3.6133\text{e}{-2}$ \\
               & $x_1x_2$ & $-1$ & $2.5452\text{e}{-2}$ & $2.5452\text{e}{-2}$ \\
\hline

\hline
\end{tabular}
\end{table}

\subsection{First-moment Performance across Different Regimes}

In addition to the results presented in the main paper, we further examine the first-moment behavior for each component of the triad system, namely \(\langle u_1 \rangle\), \(\langle u_2 \rangle\), and \(\langle u_3 \rangle\). The results are shown in Figure~\ref{fig:first_moment}.
From the figure, we observe that the mean state dynamics predicted by the two-stage framework closely match the ground truth across all regimes. This agreement is satisfied even under logarithmic scaling, indicating that both large- and small-scale variations are accurately captured.
Since the activation step primarily models stochastic fluctuations and contributes less to first-order statistics, the first-moment performance mainly reflects the accuracy of the symbolic component. Therefore, this evaluation also serves as a diagnostic of the quality of the recovered deterministic structure.

\begin{figure}[!htbp]
    \centering
    \includegraphics[width=1\linewidth]{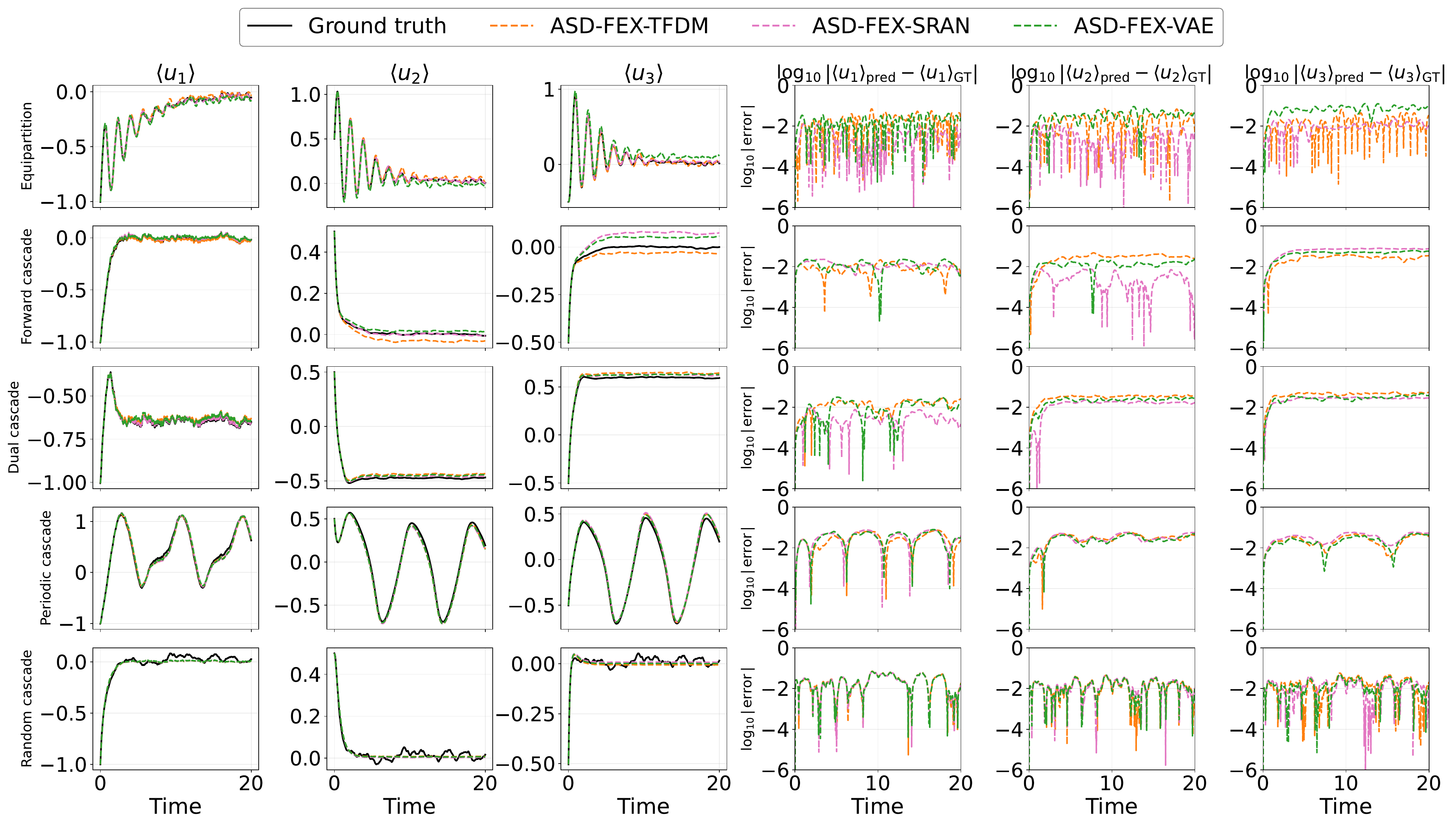}
    \caption{First-moment comparison of \(\langle u_1 \rangle\), \(\langle u_2 \rangle\), and \(\langle u_3 \rangle\) across different regimes. Here, \(\langle \cdot \rangle_{\mathrm{pred}}\) denotes the ASD prediction, while \(\langle \cdot \rangle\) denotes the ground truth. The close agreement demonstrates the accuracy of the recovered deterministic structure.The orange curve (FEX-TFDM) corresponds to FEX combined with the TFDM. The pink curve (FEX-SRAN) represents the FEX with SRAN, while the green curve (FEX-VAE) denotes the FEX with VAE.}
    \label{fig:first_moment}
\end{figure}

\subsection{Energy evolution across different regimes}

We next examine the framework's ability to reproduce the energy dynamics of each mode in the triad system. The energy evolution for each component is shown in Figure~\ref{fig:energy_evolution}.
From the figure, we observe that the predicted energy trajectories closely match the ground truth across all regimes. In particular, the two-stage framework accurately captures both the transient behavior and long-term energy evolution. Notably, this agreement persists beyond the training horizon (from \(T=10\) to \(T=20\)), demonstrating the strong generalization capability of the model. 
\begin{figure}[!htbp]
    \centering
    \includegraphics[width=1\linewidth]{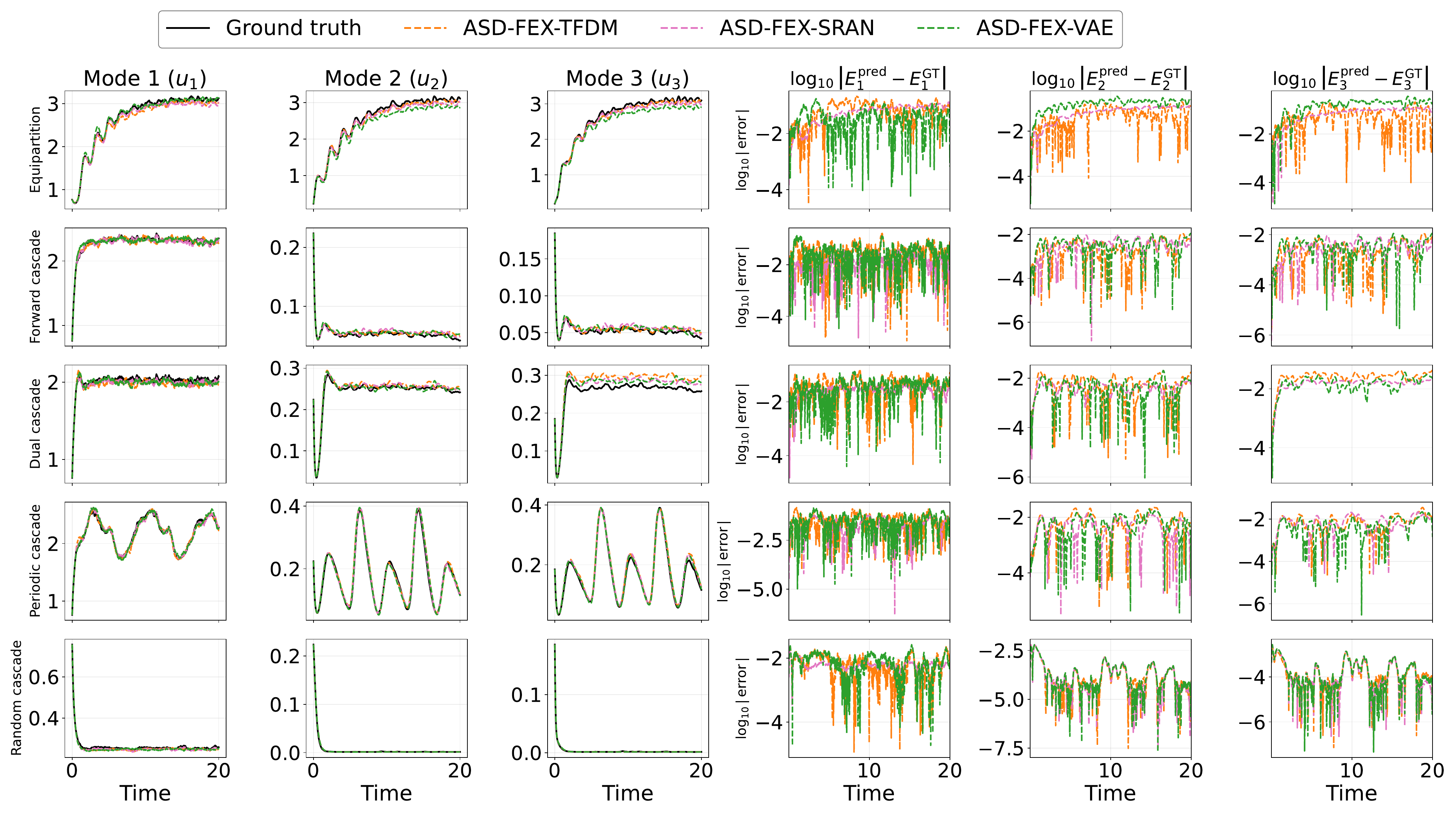}
    \caption{Energy evolution of each mode across different regimes. Here, \(E_i^{\mathrm{pred}}\) denotes the prediction for the energy of mode \(i=1,2,3\), while \(E_i^{\mathrm{GT}}\) denotes the corresponding ground truth. The orange curve (FEX-TFDM) corresponds to FEX combined with the TFDM . The pink curve (FEX-SRAN) represents FEX with SRAN, while the green curve (FEX-VAE) denotes FEX with VAE.}
    \label{fig:energy_evolution}
\end{figure}

\subsection{Higher-Order Statistics and Density Estimation}
We further evaluate all models on higher-order statistics up to order 7. Overall, FEX-SRAN maintains consistent performance across all orders. For the forward case and dual cascade, the performance of FEX-TFDM and FEX-VAE degrades at higher orders, with FEX-VAE showing a smaller decline than FEX-TFDM. Nevertheless, as shown in Figure~\ref{fig:discussion_density}, all three models including FEX-TFDM, FEX-SRAN, and FEX-VAE capture the underlying density accurately, suggesting that the degradation in higher-order moments does not significantly compromise the overall distributional recovery.

\begin{figure}[!htbp]
\begin{center}
    \includegraphics[width=0.9\linewidth]{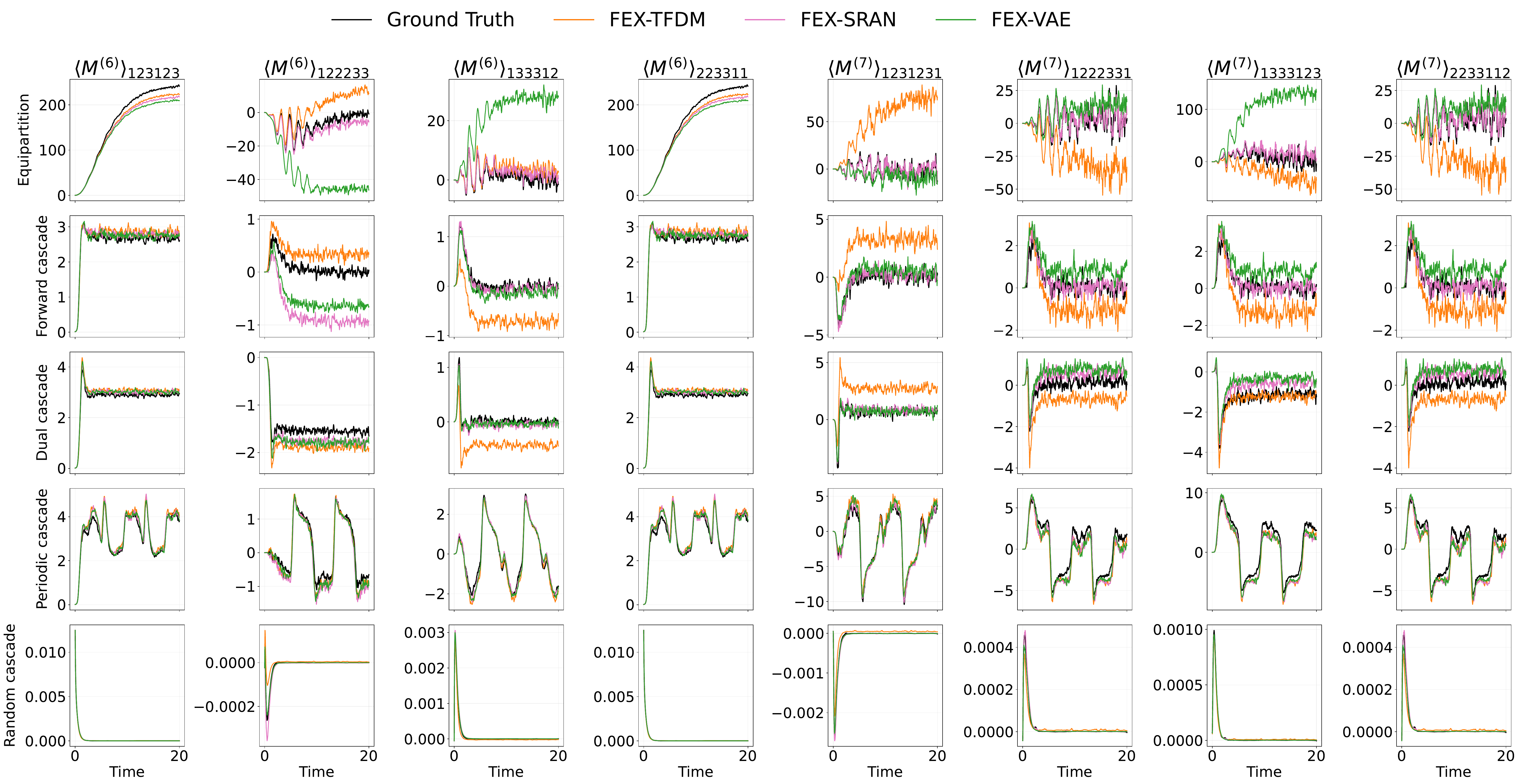}
\caption{Comparison of predicted sixth- and seventh-order moments across all regimes: equipartition, forward cascade, dual cascade, periodic cascade, and random cascade. 
Here, $M^{(k)}$ denotes moments of order $k$. For sixth-order moments, $M^{(6)}_{ijklmn} = \mathbb{E}[u_i u_j u_k u_l u_m u_n]$, and for seventh-order moments, $M^{(7)}_{ijkl mnp} = \mathbb{E}[u_i u_j u_k u_l u_m u_n u_p]$, where the indices specify the corresponding components.}
\label{fig:discussion_moment_equal_67}
\end{center}
\end{figure}

\begin{figure}[!htbp]
\begin{center}
    \includegraphics[width=1\linewidth]{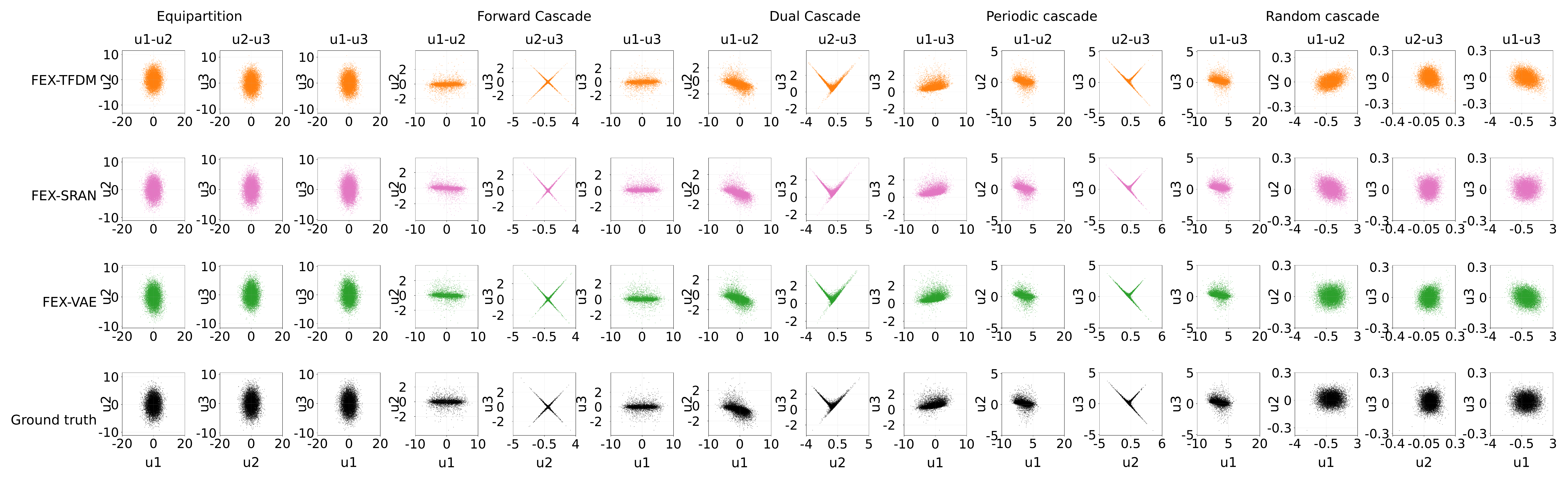}
\caption{Evolution of 2D projections of state distributions \(\mathbf{u} = (u_1,u_2,u_3)\) across all regimes. Results are obtained using the two-stage framework with FEX combined with TFDM, SRAN, and VAE. Each regime is displayed in three consecutive columns, and labels such as \(u_1-u_3\) indicate the corresponding 2D projection, here onto the 
\((u_1,u_3)\)-plane.}
\label{fig:discussion_density}
\end{center}
\end{figure}






\section*{Code Availability}

All numerical results presented in this work are fully reproducible using the publicly available repository at
\url{https://github.com/xxjan719/TurbulentFEX}.

\bibliographystyle{siam_style_file/siamplain}
\bibliography{main/main}

%% file: share_information/ex_shared.tex
\newsiamremark{remark}{Remark}
\newsiamremark{hypothesis}{Hypothesis}
\crefname{hypothesis}{Hypothesis}{Hypotheses}
\newsiamthm{claim}{Claim}
\newsiamthm{assum}{Assumption}
\newsiamthm{scheme}{Scheme}
\newsiamthm{lem}{Lemma}
\newsiamthm{thm}{Theorem}
\headers{Symbolic Discovery}{ X.~Xu,  D.~Qi, C.~Wang}

\title{
The Finite Expression Method for Turbulent Dynamics with High-Order Moment Recovery \thanks{C. W. was partially supported  by the National Science Foundation under award DMS-2206332.
D. Q. is partially supported by ONR grant N00014-24-1-2192 and NSF grant DMS-2407361.}}

\author{
Xingjian Xu\thanks{Department of Mathematics, University of Florida, Gainesvile, FL 32611 (\email{xingjianxu@ufl.edu}).}
\and 
Di Qi\thanks{Department of Mathematics, Purdue University, West Lafayette, IN 47907(\email{qidi@purdue.edu}).}
\and 
Chunmei Wang
\thanks{Department of Mathematics, University of Florida, Gainesvile, FL 32611 (\email{chunmei.wang@ufl.edu}).}
}



%% file: main/main.bib
@article{liang2025finite,
  title={Finite expression method for solving high-dimensional partial differential equations},
  author={Liang, Senwei and Yang, Haizhao},
  journal={Journal of Machine Learning Research},
  volume={26},
  number={138},
  pages={1--31},
  year={2025}
}

@article{du2025learning,
  title={Learning epidemiological dynamics via the finite expression method},
  author={Du, Jianda and Liang, Senwei and Wang, Chunmei},
  journal={Journal of machine learning for modeling and computing},
  volume={6},
  number={1},
  year={2025},
  publisher={Begel House Inc.}
}

@article{song2025finite,
  title={A finite expression method for solving high-dimensional committor problems},
  author={Song, Zezheng and Cameron, Maria K and Yang, Haizhao},
  journal={SIAM Journal on Scientific Computing},
  volume={47},
  number={1},
  pages={C1--C21},
  year={2025},
  publisher={SIAM}
}

@article{liang5327407identifying,
  title={Identifying Stochastic Dynamics Via Finite Expression Methods},
  author={LIANG, SENWEI and Wang, Chunmei and XU, XINGJIAN},
  journal={Available at SSRN 5327407}
}

@article{liu2025training,
  title={A training-free conditional diffusion model for learning stochastic dynamical systems},
  author={Liu, Yanfang and Chen, Yuan and Xiu, Dongbin and Zhang, Guannan},
  journal={SIAM Journal on Scientific Computing},
  volume={47},
  number={5},
  pages={C1144--C1171},
  year={2025},
  publisher={SIAM}
}

@article{gluhovsky1997interpretation,
  title={An interpretation of atmospheric low-order models},
  author={Gluhovsky, Alexander and Agee, Ernest},
  journal={Journal of the atmospheric sciences},
  volume={54},
  number={6},
  pages={768--773},
  year={1997}
}

@article{majda2002priori,
  title={A priori tests of a stochastic mode reduction strategy},
  author={Majda, Andrew and Timofeyev, Ilya and Vanden-Eijnden, Eric},
  journal={Physica D: Nonlinear Phenomena},
  volume={170},
  number={3-4},
  pages={206--252},
  year={2002},
  publisher={Elsevier}
}

@book{majda2016introduction,
  title={Introduction to turbulent dynamical systems in complex systems},
  author={Majda, Andrew J},
  year={2016},
  publisher={Springer}
}

@incollection{randall2007climate,
  title={Climate models and their evaluation},
  author={Randall, David A and Wood, Richard A and Bony, Sandrine and Colman, Robert and Fichefet, Thierry and Fyfe, John and Kattsov, Vladimir and Pitman, Andrew and Shukla, Jagadish and Srinivasan, Jayaraman and others},
  booktitle={Climate change 2007: The physical science basis. Contribution of Working Group I to the Fourth Assessment Report of the IPCC (FAR)},
  pages={589--662},
  year={2007},
  publisher={Cambridge University Press}
}

@article{loshchilov2016sgdr,
  title={Sgdr: Stochastic gradient descent with warm restarts},
  author={Loshchilov, Ilya and Hutter, Frank},
  journal={arXiv preprint arXiv:1608.03983},
  year={2016}
}

@article{brunton2016discovering,
  title={Discovering governing equations from data by sparse identification of nonlinear dynamical systems},
  author={Brunton, Steven L and Proctor, Joshua L and Kutz, J Nathan},
  journal={Proceedings of the national academy of sciences},
  volume={113},
  number={15},
  pages={3932--3937},
  year={2016},
  publisher={National Academy of Sciences}
}

@article{messenger2021weak,
  title={Weak SINDy for partial differential equations},
  author={Messenger, Daniel A and Bortz, David M},
  journal={Journal of Computational Physics},
  volume={443},
  pages={110525},
  year={2021},
  publisher={Elsevier}
}

@article{wanner2024higher,
  title={On higher order drift and diffusion estimates for stochastic SINDy},
  author={Wanner, Mathias and Mezi{\'c}, Igor},
  journal={SIAM Journal on Applied Dynamical Systems},
  volume={23},
  number={2},
  pages={1504--1539},
  year={2024},
  publisher={SIAM}
}

@article{lenfesty2025uncovering,
  title={Uncovering dynamical equations of stochastic decision models using data-driven SINDy algorithm},
  author={Lenfesty, Brendan and Bhattacharyya, Saugat and Wong-Lin, KongFatt},
  journal={Neural Computation},
  volume={37},
  number={3},
  pages={569--587},
  year={2025},
  publisher={MIT Press 255 Main Street, 9th Floor, Cambridge, Massachusetts 02142, USA~…}
}

@article{xu2024modeling,
  title={Modeling unknown stochastic dynamical system via autoencoder},
  author={Xu, Zhongshu and Chen, Yuan and Chen, Qifan and Xiu, Dongbin},
  journal={Journal of Machine Learning for Modeling and Computing},
  volume={5},
  number={3},
  year={2024},
  publisher={Begel House Inc.}
}

@article{wang2026error,
  title={Error estimates of a training-free diffusion model for high-dimensional sampling},
  author={Wang, Pengjun and Zhang, Zezhong and Yang, Minglei and Bao, Feng and Cao, Yanzhao and Zhang, Guannan},
  journal={arXiv preprint arXiv:2601.19740},
  year={2026}
}

@article{majda2018strategies,
  title={Strategies for reduced-order models for predicting the statistical responses and uncertainty quantification in complex turbulent dynamical systems},
  author={Majda, Andrew J and Qi, Di},
  journal={SIAM Review},
  volume={60},
  number={3},
  pages={491--549},
  year={2018},
  publisher={SIAM}
}

@article{qi2016low,
  title={Low-dimensional reduced-order models for statistical response and uncertainty quantification: Two-layer baroclinic turbulence},
  author={Qi, Di and Majda, Andrew J},
  journal={Journal of the Atmospheric Sciences},
  volume={73},
  number={12},
  pages={4609--4639},
  year={2016}
}

@phdthesis{qi2017strategies,
  title={Strategies for Reduced-Order Models in Uncertainty Quantification of Complex Turbulent Dynamical Systems},
  author={Qi, Di},
  year={2017},
  school={New York University}
}

@article{wang2025simulating,
  title={Simulating three-dimensional turbulence with physics-informed neural networks},
  author={Wang, Sifan and Sankaran, Shyam and Fan, Xiantao and Stinis, Panos and Perdikaris, Paris},
  journal={arXiv preprint arXiv:2507.08972},
  year={2025}
}

@article{lai2025h,
  title={H-FEX: A Symbolic Learning Method for Hamiltonian Systems},
  author={Lai, Jasen and Liang, Senwei and Wang, Chunmei},
  journal={arXiv preprint arXiv:2506.20607},
  year={2025}
}

@article{hardwick2025solving,
  title={Solving high-dimensional partial integral differential equations: The finite expression method},
  author={Hardwick, Gareth and Liang, Senwei and Yang, Haizhao},
  journal={Journal of Computational Physics},
  pages={114273},
  year={2025},
  publisher={Elsevier}
}

@article{huynh2025score,
  title={A Score-based Diffusion Model Approach for Adaptive Learning of Stochastic Partial Differential Equation Solutions},
  author={Huynh, Toan and Fajardo, Ruth Lopez and Zhang, Guannan and Ju, Lili and Bao, Feng},
  journal={arXiv preprint arXiv:2508.06834},
  year={2025}
}

@article{yang2025generative,
  title={Generative AI models for learning flow maps of stochastic dynamical systems in bounded domains},
  author={Yang, Minglei and Liu, Yanfang and Del-Castillo-Negrete, Diego and Cao, Yanzhao and Zhang, Guannan},
  journal={Journal of Computational Physics},
  pages={114434},
  year={2025},
  publisher={Elsevier}
}

@article{petersen2019deep,
  title={Deep symbolic regression: Recovering mathematical expressions from data via risk-seeking policy gradients},
  author={Petersen, Brenden K and Landajuela, Mikel and Mundhenk, T Nathan and Santiago, Claudio P and Kim, Soo K and Kim, Joanne T},
  journal={arXiv preprint arXiv:1912.04871},
  year={2019}
}

@article{goodfellow2014generative,
  title={Generative adversarial nets},
  author={Goodfellow, Ian J and Pouget-Abadie, Jean and Mirza, Mehdi and Xu, Bing and Warde-Farley, David and Ozair, Sherjil and Courville, Aaron and Bengio, Yoshua},
  journal={Advances in neural information processing systems},
  volume={27},
  year={2014}
}

@article{tzen2019neural,
  title={Neural stochastic differential equations: Deep latent gaussian models in the diffusion limit},
  author={Tzen, Belinda and Raginsky, Maxim},
  journal={arXiv preprint arXiv:1905.09883},
  year={2019}
}

@article{kingma2013auto,
  title={Auto-encoding variational bayes},
  author={Kingma, Diederik P and Welling, Max},
  journal={arXiv preprint arXiv:1312.6114},
  year={2013}
}

@article{gu2023stationary,
  title={Stationary density estimation of It{\^o} diffusions using deep learning},
  author={Gu, Yiqi and Harlim, John and Liang, Senwei and Yang, Haizhao},
  journal={SIAM Journal on Numerical Analysis},
  volume={61},
  number={1},
  pages={45--82},
  year={2023},
  publisher={SIAM}
}

@article{frishman2020learning,
  title={Learning force fields from stochastic trajectories},
  author={Frishman, Anna and Ronceray, Pierre},
  journal={Physical Review X},
  volume={10},
  number={2},
  pages={021009},
  year={2020},
  publisher={APS}
}

@book{majda2006nonlinear,
  title={Nonlinear dynamics and statistical theories for basic geophysical flows},
  author={Majda, Andrew and Wang, Xiaoming},
  year={2006},
  publisher={Cambridge University Press}
}

@book{vallis2017atmospheric,
  title={Atmospheric and oceanic fluid dynamics},
  author={Vallis, Geoffrey K},
  year={2017},
  publisher={Cambridge University Press}
}

@book{nicholson1983introduction,
  title={Introduction to plasma theory},
  author={Nicholson, Dwight Roy and Nicholson, Dwight R},
  volume={1},
  year={1983},
  publisher={Wiley New York}
}

@article{lorenz1969predictability,
  title={The predictability of a flow which possesses many scales of motion},
  author={Lorenz, Edward N},
  journal={Tellus},
  volume={21},
  number={3},
  pages={289--307},
  year={1969}
}

@article{majda2010quantifying,
  title={Quantifying uncertainty in climate change science through empirical information theory},
  author={Majda, Andrew J and Gershgorin, Boris},
  journal={Proceedings of the National Academy of Sciences},
  volume={107},
  number={34},
  pages={14958--14963},
  year={2010}
}

@article{ling2016reynolds,
  title={Reynolds averaged turbulence modelling using deep neural networks with embedded invariance},
  author={Ling, Julia and Kurzawski, Andrew and Templeton, Jeremy},
  journal={JFM},
  volume={807},
  pages={155--166},
  year={2016}
}

@article{duraisamy2019turbulence,
  title={Turbulence modeling in the age of data},
  author={Duraisamy, Karthik and Iaccarino, Gianluca and Xiao, Heng},
  journal={Annual Review of Fluid Mechanics},
  volume={51},
  pages={357--377},
  year={2019}
}

@article{pope2001turbulent,
  title={Turbulent flows},
  author={Pope, Stephen B},
  journal={Measurement Science and Technology},
  volume={12},
  number={11},
  pages={2020--2021},
  year={2001}
}

@article{rangan2009multiscale,
  title={Multiscale modeling of the primary visual cortex},
  author={Rangan, Aaditya V and Tao, Louis and Kovacic, Gregor and Cai, David},
  journal={IEEE Engineering in Medicine and Biology Magazine},
  volume={28},
  number={3},
  pages={19--24},
  year={2009},
  publisher={IEEE}
}
